\newtheorem{definition}{Definition}
\newtheorem{theorem}{Theorem}
\title{MSGNet: Learning Multi-Scale Inter-Series Correlations for Multivariate Time Series Forecasting}
\author{
    Wanlin Cai\textsuperscript{\rm 1},
    Yuxuan Liang\textsuperscript{\rm 2},
    Xianggen Liu\textsuperscript{\rm 1},
    Jianshuai Feng\textsuperscript{\rm 3},
    Yuankai Wu\textsuperscript{\rm 1 \footnote{Corresponding author}}
}
\begin{document}

\maketitle

\begin{abstract}
Multivariate time series forecasting poses an ongoing challenge across various disciplines. Time series data often exhibit diverse intra-series and inter-series correlations, contributing to intricate and interwoven dependencies that have been the focus of numerous studies. Nevertheless, a significant research gap remains in comprehending the varying inter-series correlations across different time scales among multiple time series, an area that has received limited attention in the literature. To bridge this gap, this paper introduces MSGNet, an advanced deep learning model designed to capture the varying inter-series correlations across multiple time scales using frequency domain analysis and adaptive graph convolution. By leveraging frequency domain analysis, MSGNet effectively extracts salient periodic patterns and decomposes the time series into distinct time scales. The model incorporates a self-attention mechanism to capture intra-series dependencies, while introducing an adaptive mixhop graph convolution layer to autonomously learn diverse inter-series correlations within each time scale. Extensive experiments are conducted on several real-world datasets to showcase the effectiveness of MSGNet. Furthermore, MSGNet possesses the ability to automatically learn explainable multi-scale inter-series correlations, exhibiting strong generalization capabilities even when applied to out-of-distribution samples. Code is available at \url{https://github.com/YoZhibo/MSGNet}.

\end{abstract}

\section{Introduction}

Throughout centuries, the art of forecasting has been an invaluable tool for scientists, policymakers, actuaries, and salespeople. Its foundation lies in recognizing that hidden outcomes, whether in the future or concealed, often reveal patterns from past observations. Forecasting involves skillfully analyzing available data, unveiling interdependencies and temporal trends to navigate uncharted territories with confidence and envision yet-to-be-encountered instances with clarity and foresight. In this context, time series forecasting emerges as a fundamental concept, enabling the analysis and prediction of data points collected over time, offering insights into variables like stock prices~\cite{cao2022ai}, weather conditions~\cite{bi2023accurate}, or customer behavior~\cite{salinas2020deepar}. 

Two interconnected realms within time series forecasting come into play: \emph{intra-series correlation} modeling, which predicts future values based on patterns within a specific time series, and \emph{inter-series correlation} modeling, which explores relationships and dependencies between multiple time series. Recently, deep learning models have emerged as a catalyst for breakthroughs in time series forecasting. On one hand, Recurrent Neural Networks (RNNs)~\cite{salinas2020deepar}, Temporal Convolution Networks (TCNs)~\cite{yue2022ts2vec}, and Transformers~\cite{zhou2021informer} have demonstrated exceptional potential in capturing temporal dynamics within individual series. Simultaneously, a novel perspective arises when considering multivariate time series as graph signals. In this view, the variables within a multivariate time series can be interpreted as nodes within a graph, interconnected through hidden dependency relationships. Consequently, Graph Neural Networks (GNNs)~\cite{kipf2016semi} offer a promising avenue for harnessing the intricate interdependencies among multiple time series. 

\begin{figure}[t]

\centerline{\includegraphics[scale=0.36]{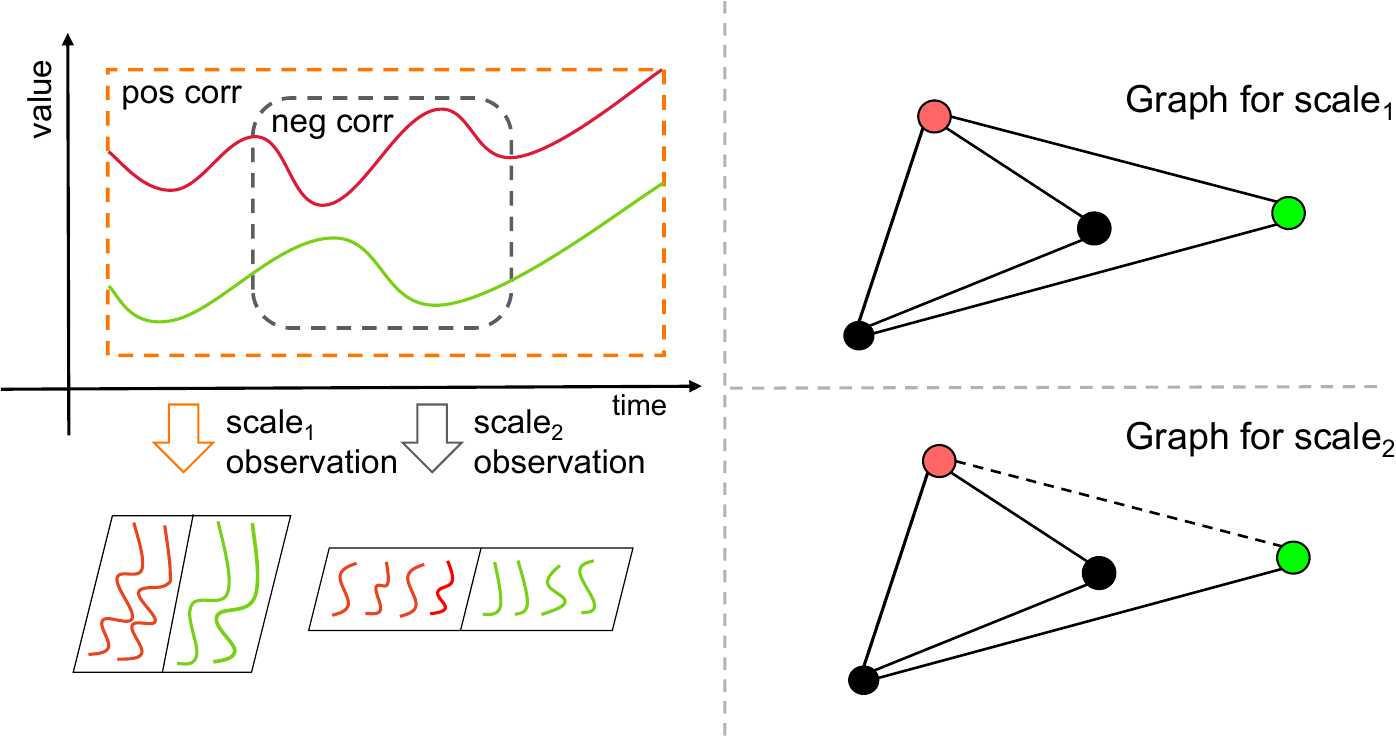}}
\caption{In the longer time $\text{scale}_1$, the green and red time series are positively correlated, whereas in the shorter time $\text{scale}_2$, they exhibit a negative correlation. Consequently, we observe two distinct graph structures in these two different time scales.}
\label{fig: concept}
\end{figure}

Within the domain of time series analysis, there is a significant oversight concerning the varying inter-series correlations across different time scales among multiple time series, which the existing deep learning models fail to accurately describe. For instance, in the realm of finance, the correlations among various asset prices, encompassing stocks, bonds, and commodities, during periods of market instability, asset correlations may increase due to a flight-to-safety phenomenon. Conversely, during economic growth, asset correlations might decrease as investors diversify their portfolios to exploit various opportunities~\cite{baele2020flights}. Similarly, in ecological systems, the dynamics governing species populations and environmental variables reveal intricate temporal correlations operating at multiple time scales~\cite{whittaker2001scale}. In Figure~\ref{fig: concept}, we provide an example where, in time $\text{scale}_1$, we can observe a positive correlation between two time series, whereas in the shorter $\text{scale}_2$, we might notice a negative correlation between them. By employing the graph-based approach, we obtain two distinct graph structures.

In the aforementioned examples, the limitation of existing deep learning models becomes apparent, as they often fail to capture the diverse interdependencies and time-varying correlations between the variables in consideration. For instance, when relying solely on one type of inter-series correlation, such as utilizing GNNs with one fixed graph structure~\cite{yu2018spatio, li2018diffusion}, these models may suffer from diminished predictive accuracy and suboptimal forecasting performance in scenarios characterized by intricate and varying inter-series correlations. While some methods consider using dynamic and time-varying graph structures to model inter-series correlations~\cite{zheng2020gman, guo2021learning}, they overlook the crucial fact that these correlations may be intimately tied to time scales of notable stability, exemplified by economic and environmental cycles.

Addressing the identified gaps and aiming to overcome the limitations of prior models, we introduce MSGNet, which is comprised of three essential components: the scale learning and transforming layer, the multiple graph convolution module, and the temporal multi-head attention module. Recognizing the paramount importance of periodicity in time series data and to capture dominant time scales effectively, we leverage the widely recognized Fast Fourier transformation (FFT) method. By applying FFT to the original time series data, we project it into spaces linked to the most prominent time scales. This approach enables us to aptly capture and represent various inter-series correlations unfolding at distinct time scales. Moreover, we introduce a multiple adaptive graph convolution module enriched with a learnable adjacency matrix. For each time scale, a dedicated adjacency matrix is dynamically learned. Our framework further incorporates a multi-head self-attention mechanism adept at capturing intra-series temporal patterns within the data. Our contributions are summarized in three folds:
\begin{itemize}
    \item We make a key observation that inter-series correlations are intricately associated with different time scales. To address this, we propose a novel structure named MSGNet that efficiently discovers and captures these multi-scale inter-series correlations. 
    \item To tackle the challenge of capturing both intra-series and inter-series correlations simultaneously, we introduce a combination of multi-head attention and adaptive graph convolution modules. 
    \item Through extensive experimentation on real-world datasets, we provide empirical evidence that MSGNet consistently outperforms existing deep learning models in time series forecasting tasks. Moreover, MSGNet exhibits better generalization capability.
\end{itemize}

\section{Related Works}

\subsection{Time Series Forecasting}

Time series forecasting has a long history, with classical methods like VAR~\cite{kilian2017structural} and Prophet~\cite{taylor2018forecasting} assuming that intra-series variations follow pre-defined patterns. However, real-world time series often exhibit complex variations that go beyond the scope of these pre-defined patterns, limiting the practical applicability of classical methods. In response, recent years have witnessed the emergence of various deep learning models, including MLPs~\cite{oreshkin2019n, zeng2023transformers}, TCNs~\cite{yue2022ts2vec}, RNNs~\cite{rangapuram2018deep, gasthaus2019probabilistic, salinas2020deepar} and Transformer-based models~\cite{zhou2021informer,wu2021autoformer, zhou2022fedformer, wen2022transformers, wang2023micn}, designed for time series analysis. Yet, an ongoing question persists regarding the most suitable candidate for modeling intra-series correlations, whether it be MLP or transformer-based architectures~\cite{Yuqietal-2023-PatchTST, das2023long}. Some approaches have considered periodicities as crucial features in time series analysis. For instance, DEPTS~\cite{fan2022depts} instantiates periodic functions as a series of cosine functions, while TimesNet~\cite{wu2023timesnet} performs periodic-dimensional transformations of sequences. Notably, none of these methods, though, give consideration to the diverse inter-series correlations present at different periodicity scales, which is a central focus of this paper.

\begin{figure*}[htbp]
\centering
\subfigure{\includegraphics[scale=0.35]{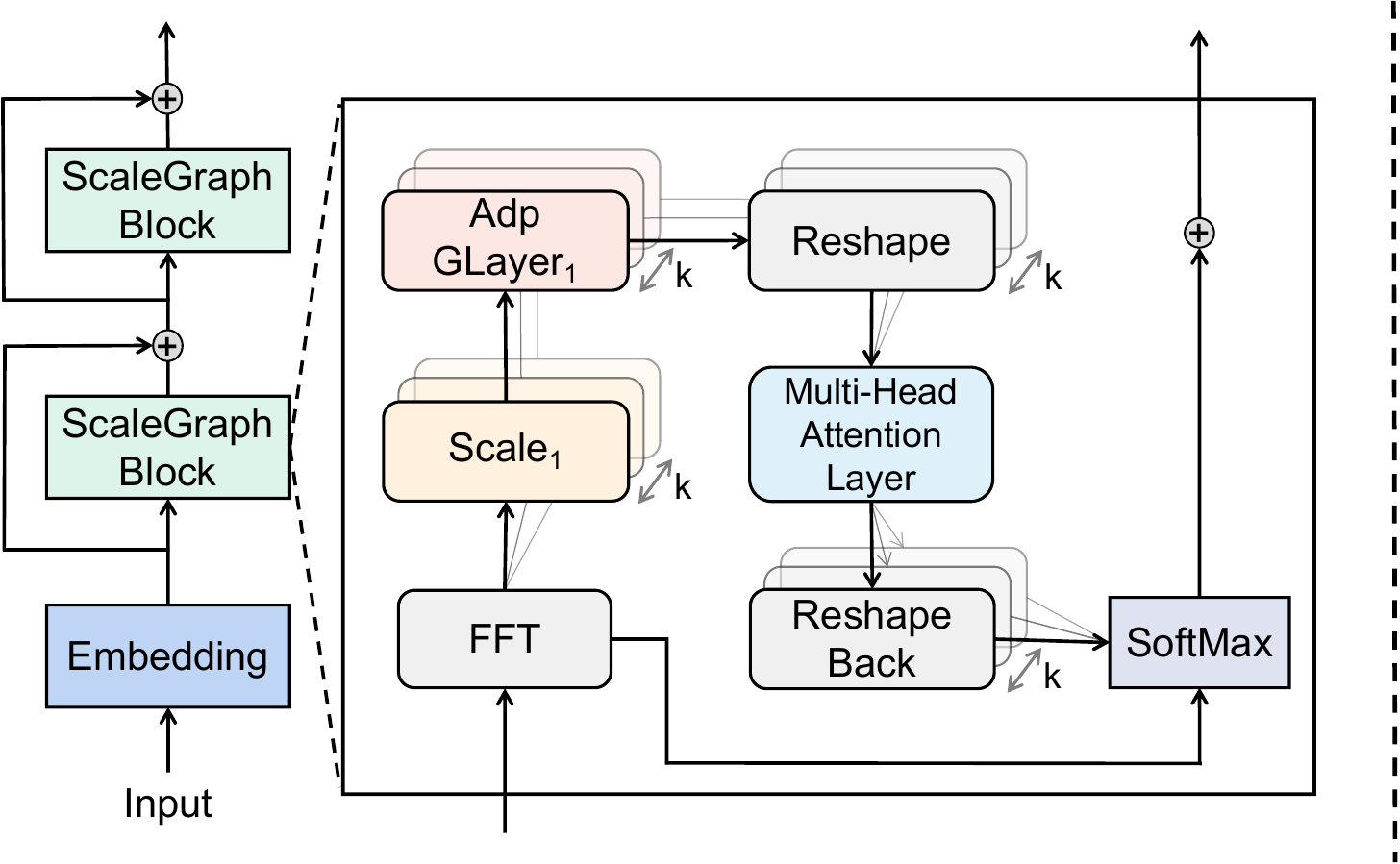}}
\subfigure
{\includegraphics[scale=0.36]{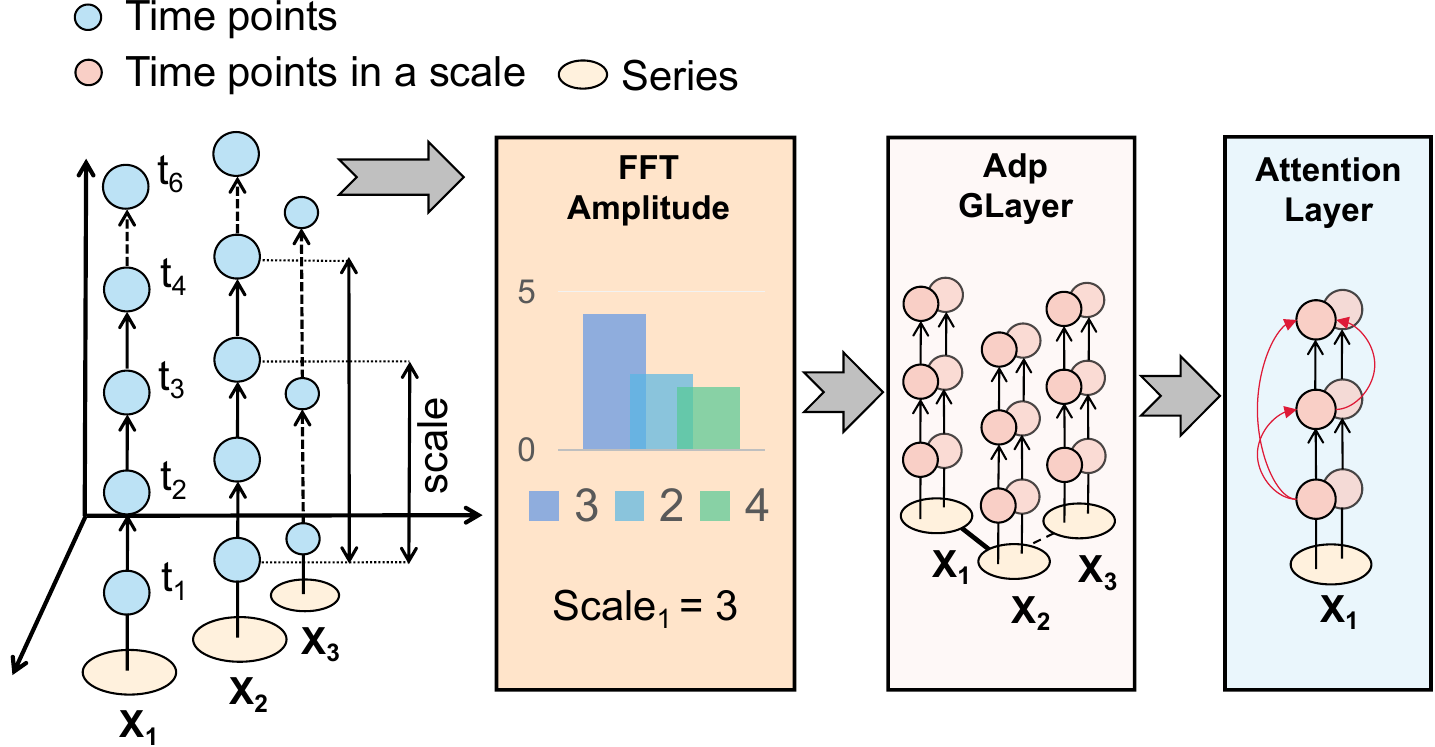}}
\caption{ MSGNet employs several ScaleGraph blocks, each encompassing three pivotal modules: an FFT module for multi-scale data identification, an adaptive graph convolution module for inter-series correlation learning within a time scale, and a multi-head attention module for intra-series correlation learning. }
 \label{fig: model}
\end{figure*}

\subsection{GNNs for Inter-series Correlation Learning}

Recently, there has been a notable rise in the use of GNNs~\cite{defferrard2016convolutional, kipf2016semi, abu2019mixhop} for learning inter-series correlations. Initially introduced to address traffic prediction~\cite{li2018diffusion, yu2018spatio, cini2023scalable, wu2023spatiotemporal} and skeleton-based action recognition~\cite{shi2019skeleton}, GNNs have demonstrated significant improvements over traditional methods in short-term time series prediction. However, it is important to note that most existing GNNs are designed for scenarios where a pre-defined graph structure is available. For instance, in traffic prediction, the distances between different sensors can be utilized to define the graph structure. Nonetheless, when dealing with general multivariate forecasting tasks, defining a general graph structure based on prior knowledge can be challenging. Although some methods have explored the use of learnable graph structures~\cite{wu2019graph, bai2020adaptive, wu2020connecting}, they often consider a limited number of graph structures and do not connect the learned graph structures with different time scales. Consequently, these approaches may not fully capture the intricate and evolving inter-series correlations. 


\section{Problem Formulation} 

In the context of multivariate time series forecasting, consider a scenario where the number of variables is denoted by $N$. We are given input data $\mathbf{X}_{t-L: t} \in \mathbb{R}^{N \times L}$, which represents a retrospective window of observations, comprising ${X}^{i}_{\tau}$ values at the $\tau\text{th}$ time point for each variable $i$ in the range from $t-L$ to $t-1$. Here, $L$ represents the size of the retrospective window, and $t$ denotes the initial position of the forecast window. The objective of the time series forecasting task is to predict the future values of the $N$ variables for a time span of $T$ future time steps. The predicted values are represented by $\mathbf{\hat{X}}_{t: t+T} \in \mathbb{R}^{N \times T}$, which includes ${X}^{i}_{\tau}$ values at each time point $\tau$ from $t$ to $t+T-1$ for all the variables.

We assume the ability to discern varying inter-series correlations among $N$ time series at different time scales, which can be represented by graphs. For instance, given a time scale $s_i< L$, we can identify a graph structure $\mathcal{G}_i = \{ \mathcal{V}_i, \mathcal{E}_i\}$ from the time series $\mathbf{X}_{p-s_i: p}$. Here, $\mathcal{V}_i$ denotes a set of nodes with $|\mathcal{V}_i| = N$, $\mathcal{E}_i \subseteq \mathcal{V}_i \times \mathcal{V}_i$ represents the weighted edges, and $p$ denotes an arbitrary time point. Considering a collection of $k$ time scales, denoted as $\{s_1, \cdots, s_k\}$, we can identify $k$ adjacency matrices, represented as $\{\mathbf{A}^1, \cdots, \mathbf{A}^k\}$, where each $\mathbf{A}^k \in \mathbb{R}^{N \times N}$. These adjacency matrices capture the varying inter-series correlations at different time scales. 

\section{Methodology} 

As previously mentioned, our work aims to bridge the gaps in existing time series forecasting models through the introduction of MSGNet, a novel framework designed to capture diverse inter-series correlations at different time scales. The overall model architecture is illustrated in Figure~\ref{fig: model}. Comprising multiple ScaleGraph blocks, MSGNet's essence lies in its ability to seamlessly intertwine various components. Each ScaleGraph block entails a four-step sequence: 1) Identifying the scales of input time series; 2) Unveiling scale-linked inter-series correlations using adaptive graph convolution blocks; 3) Capturing intra-series correlations through multi-head attention; and 4) Adaptively aggregating representations from different scales using a SoftMax function. 


\subsection{Input Embedding and Residual Connection}

We embed $N$ variables at the same time step into a vector of size $d_{\text{model}}$: $\mathbf{X}_{t-L: t} \to \mathbf{X}_{\text{emb}}$, where $\mathbf{X}_{\text{emb}} \in \mathbb{R}^{d_{\text{model}} \times L}$. We employ the uniform input representation proposed in \cite{zhou2021informer} to generate the embedding. Specifically, $\mathbf{X}_{\text{emb}}$ is calculated using the following equation:
\begin{equation}
\mathbf{X}_{\text{emb}} = \alpha \text{Conv1D}(\mathbf{\hat{X}}_{t-L: t}) + \mathbf{PE} + \sum^P_{p = 1} \mathbf{SE}_p.
\end{equation}
Here, we first normalize the input $\mathbf{X}_{t-L:t}$ and obtain $\mathbf{\hat{X}}_{t-L:t}$, as the normalization strategy has been proven effective in improving stationarity~\cite{liu2022non}. Then we project $\mathbf{\hat{X}}_{t-L: t}$ into a $d_{\text{model}}$-dimensional matrix using 1-D convolutional filters (kernel width=3, stride=1). The parameter $\alpha$ serves as a balancing factor, adjusting the magnitude between the scalar projection and the local/global embeddings. $\mathbf{PE} \in \mathbb{R}^{d_{\text{model}} \times L}$ represents the positional embedding of input $\mathbf{X}$, and $\mathbf{SE}_p \in \mathbb{R}^{d_{\text{model}} \times L}$ is a learnable global time stamp embedding with a limited vocabulary size (60 when minutes as the finest granularity).

We implement MSGNet in a residual manner \cite{he2016deep}. At the very outset, we set $\mathbf{X}^{0} = \mathbf{X}_{\text{emb}}$, where $\mathbf{X}_{\text{emb}}$ represents the raw inputs projected into deep features by the embedding layer. In the $l$-th layer of MSGNet, the input is $\mathbf{X}^{l-1} \in \mathbb{R}^{d_{\text{model}} \times L}$, and the process can be formally expressed as follows:
\begin{equation}
\mathbf{X}^{l}=\text{ScaleGraphBlock}\left(\mathbf{X}^{l-1}\right)+\mathbf{X}^{l-1},
\end{equation}
Here, $\text{ScaleGraphBlock}$ denotes the operations and computations that constitute the core functionality of the MSGNet layer.

\subsection{Scale Identification}

Our objective is to enhance forecasting accuracy by leveraging inter-series correlations at different time scales. The choice of scale is a crucial aspect of our approach, and we place particular importance on selecting periodicity as the scale source. The rationale behind this choice lies in the inherent significance of periodicity in time series data. For instance, in the daytime when solar panels are exposed to sunlight, the time series of energy consumption and solar panel output tend to exhibit a stronger correlation. This correlation pattern would differ if we were to choose a different periodicity, such as considering the correlation over the course of a month or a day.

Inspired by TimesNet~\cite{wu2023timesnet}, we employ the Fast Fourier Transform (FFT) to detect the prominent periodicity as the time scale:
\begin{equation}
\begin{split}
\mathbf{F} = \text{Avg} \left( \text{Amp}  \left(\text{FFT}  (\mathbf{X}_{\text{emb}})\right) \right)), \\
f_1, \cdots, f_k  = \underset{f_* \in \{1, \cdots, \frac{L}{2} \} } {\text{argTopk}} (\mathbf{F}), 
s_i =  \frac{L}{f_i}.
\end{split}
\end{equation}
Here, $\text{FFT}(\cdot)$ and $\text{Amp}(\cdot)$ denote the FFT and the calculation of amplitude values, respectively. The vector $\mathbf{F} \in \mathbb{R}^L$ represents the calculated amplitude of each frequency, which is averaged across $d_{\text{model}}$ dimensions by the function $\text{Avg}(\cdot)$. 

In this context, it is noteworthy that the temporally varying inputs may demonstrate distinct periodicities, thereby allowing our model to detect evolving scales. We posit that the correlations intrinsic to this time-evolving periodic scale remain stable. This viewpoint leads us to observe dynamic attributes in the inter-series and intra-series correlations learned by our model.

Based on the selected time scales \(\{s_1, \ldots, s_k\}\), we can get several representations corresponding to different time scales by reshaping the inputs into 3D tensors using the following equations:
\begin{equation}
\mathbf{\mathcal{X}}^i = \text{Reshape}_{s_i, f_i}(\text{Padding}(\mathbf{X}_\text{in})), \quad i \in \{1, \ldots, k\},
\end{equation}
where \(\text{Padding}(\cdot)\) is used to extend the time series by zeros along the temporal dimension to make it compatible for \(\text{Reshape}_{s_i, f_i}(\cdot)\). Note that \(\mathbf{\mathcal{X}}^i \in \mathbb{R}^{d_\text{model} \times s_i \times f_i }\) denotes the \(i\)-th reshaped time series based on time scale \(i\). We use $\mathbf{X}_\text{in}$ to denote the input matrix of the ScaleGraph block.

\subsection{Multi-scale Adaptive Graph Convolution} 

We propose a novel multi-scale graph convolution approach to capture specific and comprehensive inter-series dependencies. To achieve this, we initiate the process by projecting the tensor corresponding to the $i$-th scale back into a tensor with $N$ variables, where $N$ represents the number of time series. This projection is carried out through a linear transformation, defined as follows:
\begin{equation}
\mathbf{\mathcal{H}}^i = \mathbf{{W}}^i \mathbf{\mathcal{X}}^i.
\end{equation}
Here, $\mathbf{\mathcal{H}}^i \in \mathbb{R}^{N \times s_i \times f_i}$,  and $\mathbf{{W}}^i \in \mathbb{R}^{N \times d_\text{model}}$ is a learnable weight matrix, tailored to the $i$-th scale tensor. One may raise concerns that inter-series correlation could be compromised following the application of linear mapping and subsequent linear mapping back. However, our comprehensive experiments demonstrate a noteworthy outcome: the proposed approach adeptly preserves the inter-series correlation by the graph convolution approach.

The graph learning process in our approach involves generating two trainable parameters, $\mathbf{E}^i_1$ and $\mathbf{E}^i_2 \in \mathbb{R}^{N \times h}$. Subsequently, an adaptive adjacency matrix is obtained by multiplying these two parameter matrices, following the formula:
\begin{equation}
\mathbf{A}^i = \text{SoftMax}(\text{ReLu}(\mathbf{E}^i_1 (\mathbf{E}^i_2)^T)).
\end{equation}
In this formulation, we utilize the $\text{SoftMax}$ function to normalize the weights between different nodes, ensuring a well-balanced and meaningful representation of inter-series relationships. 



After obtaining the adjacency matrix $\mathbf{A}^i$ for the $i$-th scale, we utilize the Mixhop graph convolution method~\cite{abu2019mixhop} to capture the inter-series correlation, as its proven capability to represent features that other models may fail to capture (See Appendix). The graph convolution is defined as follows:
\begin{equation}
\mathbf{\mathcal{H}}^{{{i}}}_{\text {out }} = \sigma\left(\underset{j \in \mathcal{P}}{\Vert} (\mathbf{A}^i)^j \mathbf{\mathcal{H}}^{{{i}}}\right),
\label{eq:ag}
\end{equation}
where $\mathbf{\mathcal{H}}^{{{i}}}_{\text {out }}$ represents the output after fusion at scale $i$, $\sigma()$ is the activation function, the hyper-parameter P is a set of integer adjacency
powers, $(\mathbf{A}^i)^j$ denotes the learned adjacency matrix $\mathbf{A}^i$ multiplied by itself $j$ times, and $\Vert$ denotes a column-level connection, linking intermediate variables generated during each iteration. Then, we proceed to utilize a multi-layer perceptron (MLP) to project $\mathbf{\mathcal{H}}^{{{i}}}_{\text {out }}$ back into a 3D tensor $\mathbf{\mathcal{\hat{X}}}^i \in \mathbb{R}^{d_{\text{model}} \times s_i \times f_i}$.

\subsection{Multi-head Attention and Scale Aggregation}

In each time scale, we employ the Multi-head Attention (MHA) to capture the intra-series correlations. Specifically, for each time scale tensor $\mathbf{\mathcal{\hat{X}}}^i$, we apply self MHA on the time scale dimension of the tensor:
\begin{equation}
\mathbf{\mathcal{\hat{X}}}^i_\text{out} = \text{MHA}_s(\mathbf{\mathcal{\hat{X}}}^i).
\end{equation}
Here, $\text{MHA}_s(\cdot)$ refers to the multi-head attention function proposed in~\cite{vaswani2017attention} in the scale dimension. For implementation, it involves reshape the input tensor of size $B \times d_{\text{model}} \times s_i \times f_i$ into a $B f_i \times d_{\text{model}} \times s_i $ tensor, $B$ is the batch size. Although some studies have raised concerns about the effectiveness of MHA in capturing long-term temporal correlations in time series~\cite{zeng2023transformers}, we have successfully addressed this limitation by employing scale transformation to convert long time spans into periodic lengths. Our results, as presented in the Appendix, show that MSGNet maintains its performance consistently even as the input time increases. 

Finally, to proceed to the next layer, we need to integrate $k$ different scale tensors ${\mathbf{\mathcal{\hat{X}}}^1_\text{out}, \cdots, \mathbf{\mathcal{\hat{X}}}^k_\text{out} }$. We first reshape the tensor of each scale back to a 2-way matrix $\mathbf{{\hat{X}}}^i_\text{out} \in \mathbb{R}^{d_{\text{model}} \times L}$. Then, we aggregate the different scales based on their amplitudes:
\begin{equation}
\begin{split}
\hat{a}_1, \cdots, \hat{a}_k = & \text{SoftMax}(\mathbf{F}_{f_1}, \cdots, \mathbf{F}_{f_k}), \\
\mathbf{{\hat{X}}}_\text{out} = & \sum^k_{i =1} \hat{a}_i \mathbf{{\hat{X}}}^i_\text{out}.
\end{split}
\end{equation}
In this process, $\mathbf{F}_{f_1}, \cdots, \mathbf{F}_{f_k}$ are amplitudes corresponding to each scale, calculated using the FFT. The SoftMax function is then applied to compute the amplitudes $\hat{a}_1, \cdots, \hat{a}_k$. This Mixture of Expert (MoE)~\cite{jacobs1991adaptive} strategy enables the model to emphasize information from different scales based on their respective amplitudes, facilitating the effective incorporation of multi-scale features into the next layer (Appendix).

\subsection{Output Layer}

To perform forecasting, our model utilizes linear projections in both the time dimension and the variable dimension to transform $\mathbf{{\hat{X}}}_\text{out} \in \mathbb{R}^{d_{\text{model}} \times L}$ into $\mathbf{\hat{X}}_{t: t+T} \in \mathbb{R}^{N \times T}$. This transformation can be expressed as:
\begin{equation}
\mathbf{\hat{X}}_{t: t+T} = \mathbf{W_s} \mathbf{{\hat{X}}}_\text{out} \mathbf{W_t} + \mathbf{b}.
\end{equation}
Here, $\mathbf{W_s} \in \mathbb{R}^{N \times d_{\text{model}}}$, $\mathbf{W_t} \in \mathbb{R}^{L \times T}$, and $\mathbf{b} \in \mathbb{R}^T$ are learnable parameters. The $\mathbf{W_s}$ matrix performs the linear projection along the variable dimension, and $\mathbf{W_t}$ does the same along the time dimension. The resulting $\mathbf{\hat{X}}_{t: t+T}$ is the forecasted data, where $N$ represents the number of variables, $L$ denotes the input sequence length, and $T$ signifies the forecast horizon.

\begin{table*}[htb!]
\small
\tabcolsep=0.17cm
\renewcommand\arraystretch{0.95}
\centerline{
\begin{tabular}{cccccccccccccccc}
\toprule
\multicolumn{2}{c}{{\color[HTML]{333333} Models}}     & \multicolumn{2}{c}{\textbf{Ours}}                                             & \multicolumn{2}{c}{TimesNet}                                                  & \multicolumn{2}{c}{DLinear}                                                   & \multicolumn{2}{c}{NLinear}                                                   & \multicolumn{2}{c}{MTGnn}                                                  & \multicolumn{2}{c}{Autoformer}                                                & \multicolumn{2}{c}{Informer}     \\ \midrule
\multicolumn{2}{c}{Metric}                                                    & MSE                                   & MAE                                   & MSE                                   & MAE                                   & MSE                                   & MAE                                   & MSE                                   & MAE                                   & MSE                                   & MAE                                & MSE                                   & MAE                                   & MSE           & MAE                        \\ 
\toprule
\multicolumn{1}{c|}{}                              & \multicolumn{1}{c|}{96}  & {\color[HTML]{000000} \textbf{0.183}} & {\color[HTML]{000000} \textbf{0.301}} & 0.237                                 & 0.350                                 & 0.221                                 & 0.337                                 & 0.270                                 & 0.379                                 & {\color[HTML]{000000} {\underline{0.196}}}    & {\color[HTML]{000000} {\underline{0.316}}} & {\color[HTML]{000000} {0.204}}    & {\color[HTML]{000000} {0.319}}    & 0.333         & 0.405                    \\
\multicolumn{1}{c|}{}                              & \multicolumn{1}{c|}{192} & {\color[HTML]{000000} \textbf{0.189}} & {\color[HTML]{000000} \textbf{0.306}} & 0.224                                 & 0.337                                 & 0.220                                 & 0.336                                 & 0.272                                 & 0.380                                 & 0.272                                 & 0.379                              & {\color[HTML]{000000} {\underline{0.200}}}    & {\color[HTML]{000000} {\underline{0.314}}}    & 0.358         & 0.421                    \\
\multicolumn{1}{c|}{}                              & \multicolumn{1}{c|}{336} & {\color[HTML]{000000} {\underline{0.206}}}    & {\color[HTML]{000000} {\underline{0.320}}}    & 0.289                                 & 0.394                                 & 0.229                                 & 0.342                                 & 0.280                                 & 0.385                                 & 0.260                                 & 0.369                              & {\color[HTML]{000000} \textbf{0.201}} & {\color[HTML]{000000} \textbf{0.318}} & 0.398         & 0.446                   \\
\multicolumn{1}{c|}{\multirow{-4}{*}{{Flight}}}      & \multicolumn{1}{c|}{720} & {\color[HTML]{000000} \textbf{0.253}} & {\color[HTML]{000000} \textbf{0.358}} & 0.310                                 & 0.408                                 & {\color[HTML]{000000} {\underline{0.263}}}    & {\color[HTML]{000000} {\underline{0.366}}}    & 0.316                                 & 0.409                                 & 0.390                                 & 0.449                              & 0.345                                 & 0.426                                 & 0.476         & 0.484                    \\ \midrule
\multicolumn{1}{c|}{}                              & \multicolumn{1}{c|}{96}  & {\color[HTML]{000000} \textbf{0.163}} & {\color[HTML]{000000} \textbf{0.212}} & 0.172                                 & {\color[HTML]{000000} {\underline{0.220}}}    & 0.196                                 & 0.255                                 & 0.196                                 & 0.235                                 & {\color[HTML]{000000} {\underline{0.171}}}    & 0.231                              & 0.266                                 & 0.336                                 & 0.300         & 0.384                   \\
\multicolumn{1}{c|}{}                              & \multicolumn{1}{c|}{192} & {\color[HTML]{000000} \textbf{0.212}} & {\color[HTML]{000000} \textbf{0.254}} & 0.219                                 & {\color[HTML]{000000} {\underline{0.261}}}    & 0.237                                 & 0.296                                 & 0.241                                 & 0.271                                 & {\color[HTML]{000000} {\underline{0.215}}}    & 0.274                              & 0.307                                 & 0.367                                 & 0.598         & 0.544                    \\
\multicolumn{1}{c|}{}                              & \multicolumn{1}{c|}{336} & {\color[HTML]{000000} { \underline{0.272} }}    & {\color[HTML]{000000} \textbf{0.299}} & 0.280                                 & {\color[HTML]{000000} { \underline{0.306} }}    & 0.283                                 & 0.335                                 & 0.293                                 & 0.308                                 & {\color[HTML]{000000} \textbf{0.266}} & 0.313                              & 0.359                                 & 0.395                                 & 0.578         & 0.523                    \\
\multicolumn{1}{c|}{\multirow{-4}{*}{{Weather}}}     & \multicolumn{1}{c|}{720} & 0.350                                 & {\color[HTML]{000000} \textbf{0.348}} & 0.365                                 & 0.359                                 & {\color[HTML]{000000} {\underline{0.345}}}    & 0.381                                 & 0.366                                 & {\color[HTML]{000000} {\underline{0.356}}}    & {\color[HTML]{000000} \textbf{0.344}} & 0.375                              & 0.419                                 & 0.428                                 & 1.059         & 0.741                    \\ \midrule
\multicolumn{1}{c|}{}                              & \multicolumn{1}{c|}{96}  & {\color[HTML]{000000} \textbf{0.319}} & {\color[HTML]{000000} \textbf{0.366}} & {\color[HTML]{000000} {\underline{0.338}}}    & 0.375                                 & 0.345                                 & 0.372                                 & 0.350                                 & {\color[HTML]{000000} {\underline{0.371}}}    & 0.381                                 & 0.415                              & 0.505                                 & 0.475                                 & 0.672         & 0.571                    \\
\multicolumn{1}{c|}{}                              & \multicolumn{1}{c|}{192} & {\color[HTML]{000000} {\underline{0.376}}}    & 0.397                                 & {\color[HTML]{000000} \textbf{0.374}} & {\color[HTML]{000000} \textbf{0.387}} & 0.380                                 & {\color[HTML]{000000} {\underline{0.389}}}    & 0.389                                 & 0.390                                 & 0.442                                 & 0.451                              & 0.553                                 & 0.496                                 & 0.795         & 0.669                    \\
\multicolumn{1}{c|}{}                              & \multicolumn{1}{c|}{336} & 0.417                                 & 0.422                                 & {\color[HTML]{000000} \textbf{0.410}} & {\color[HTML]{000000} \textbf{0.411}} & {\color[HTML]{000000} {\underline{0.413}}}    & 0.413                                 & 0.422                                 & {\color[HTML]{000000} {\underline{0.412}}}    & 0.475                                 & 0.475                              & 0.621                                 & 0.537                                 & 1.212         & 0.871                    \\
\multicolumn{1}{c|}{\multirow{-4}{*}{{ETTm1}}}       & \multicolumn{1}{c|}{720} & 0.481                                 & 0.458                                 & {\color[HTML]{000000} {\underline{0.478}}}    & {\color[HTML]{000000} {\underline{0.450}}}    & {\color[HTML]{000000} \textbf{0.474}} & 0.453                                 & 0.482                                 & {\color[HTML]{000000} \textbf{0.446}} & 0.531                                 & 0.507                              & 0.671                                 & 0.561                                 & 1.166         & 0.823                    \\ \midrule
\multicolumn{1}{c|}{}                              & \multicolumn{1}{c|}{96}  & {\color[HTML]{000000} \textbf{0.177}} & {\color[HTML]{000000} \textbf{0.262}} & {\color[HTML]{000000} {\underline{0.187}}}    & {\color[HTML]{000000} {\underline{0.267}}}    & 0.193                                 & 0.292                                 & 0.188                                 & 0.272                                 & 0.240                                 & 0.343                              & 0.255                                 & 0.339                                 & 0.365         & 0.453                    \\
\multicolumn{1}{c|}{}                              & \multicolumn{1}{c|}{192} & {\color[HTML]{000000} \textbf{0.247}} & {\color[HTML]{000000} \textbf{0.307}} & {\color[HTML]{000000} {\underline{0.249}}}    & {\color[HTML]{000000} {\underline{0.309}}}    & 0.284                                 & 0.362                                 & 0.253                                 & 0.312                                 & 0.398                                 & 0.454                              & 0.281                                 & 0.340                                 & 0.533         & 0.563                    \\
\multicolumn{1}{c|}{}                              & \multicolumn{1}{c|}{336} & {\color[HTML]{000000} \textbf{0.312}} & {\color[HTML]{000000} \textbf{0.346}} & 0.321                                 & 0.351                                 & 0.369                                 & 0.427                                 & {\color[HTML]{000000} {\underline{0.314}}}    & {\color[HTML]{000000} {\underline{0.350}}}    & 0.568                                 & 0.555                              & 0.339                                 & 0.372                                 & 1.363         & 0.887                   \\
\multicolumn{1}{c|}{\multirow{-4}{*}{{ETTm2}}}       & \multicolumn{1}{c|}{720} & {\color[HTML]{000000} {\underline{0.414}}}    & {\color[HTML]{000000} \textbf{0.403}} & {\color[HTML]{000000} \textbf{0.408}} & {\color[HTML]{000000} {\underline{0.403}}}    & 0.554                                 & 0.522                                 & 0.414                                 & 0.405                                 & 1.072                                 & 0.767                              & 0.433                                 & 0.432                                 & 3.379         & 1.338                    \\ \midrule
\multicolumn{1}{c|}{}                              & \multicolumn{1}{c|}{96}  & 0.390                                 & 0.411                                 & {\color[HTML]{000000} \textbf{0.384}} & 0.402                                 & {\color[HTML]{000000} {\underline{0.386}}}    & {\color[HTML]{000000} \textbf{0.400}} & 0.393                                 & {\color[HTML]{000000} {\underline{0.400}}}    & 0.440                                 & 0.450                              & 0.449                                 & 0.459                                 & 0.865         & 0.713                    \\
\multicolumn{1}{c|}{}                              & \multicolumn{1}{c|}{192} & 0.442                                 & 0.442                                 & {\color[HTML]{000000} \textbf{0.436}} & {\color[HTML]{000000} \textbf{0.429}} & {\color[HTML]{000000} {\underline{0.437}}}    & {\color[HTML]{000000} {\underline{0.432}}}    & 0.449                                 & 0.433                                 & 0.449                                 & 0.433                              & 0.500                                 & 0.482                                 & 1.008         & 0.792                   \\
\multicolumn{1}{c|}{}                              & \multicolumn{1}{c|}{336} & {\color[HTML]{000000} \textbf{0.480}} & 0.468                                 & 0.491                                 & 0.469                                 & {\color[HTML]{000000} {\underline{0.481}}}    & {\color[HTML]{000000} {\underline{0.459}}}    & 0.485                                 & {\color[HTML]{000000} \textbf{0.448}} & 0.598                                 & 0.554                              & 0.521                                 & 0.496                                 & 1.107         & 0.809                    \\
\multicolumn{1}{c|}{\multirow{-4}{*}{{ETTh1}}}       & \multicolumn{1}{c|}{720} & {\color[HTML]{000000} {\underline{0.494}}}    & {\color[HTML]{000000} {\underline{0.488}}}    & 0.521                                 & 0.500                                 & 0.519                                 & 0.516                                 & {\color[HTML]{000000} \textbf{0.469}} & {\color[HTML]{000000} \textbf{0.461}} & 0.685                                 & 0.620                              & 0.514                                 & 0.512                                 & 1.181         & 0.865                    \\ \midrule
\multicolumn{1}{c|}{}                              & \multicolumn{1}{c|}{96}  & {\color[HTML]{000000} {\underline{0.328}}}    & {\color[HTML]{000000} {\underline{0.371}}}    & 0.340                                 & 0.374                                 & 0.333                                 & 0.387                                 & {\color[HTML]{000000} \textbf{0.322}} & {\color[HTML]{000000} \textbf{0.369}} & 0.496                                 & 0.509                              & 0.346                                 & 0.388                                 & 3.755         & 1.525                    \\
\multicolumn{1}{c|}{}                              & \multicolumn{1}{c|}{192} & {\color[HTML]{000000} \textbf{0.402}} & {\color[HTML]{000000} \textbf{0.414}} & {\color[HTML]{000000} {\underline{0.402}}}    & {\color[HTML]{000000} {\underline{0.414}}}    & 0.477                                 & 0.476                                 & 0.410                                 & 0.419                                 & 0.716                                 & 0.616                              & 0.456                                 & 0.452                                 & 5.602         & 1.931                    \\
\multicolumn{1}{c|}{}                              & \multicolumn{1}{c|}{336} & {\color[HTML]{000000} \textbf{0.435}} & {\color[HTML]{000000} \textbf{0.443}} & 0.452                                 & 0.452                                 & 0.594                                 & 0.541                                 & {\color[HTML]{000000} {\underline{0.444}}}    & {\color[HTML]{000000} {\underline{0.449}}}    & 0.718                                 & 0.614                              & 0.482                                 & 0.486                                 & 4.721         & 1.835                    \\
\multicolumn{1}{c|}{\multirow{-4}{*}{{ETTh2}}}       & \multicolumn{1}{c|}{720} & {\color[HTML]{000000} \textbf{0.417}} & {\color[HTML]{000000} \textbf{0.441}} & 0.462                                 & 0.468                                 & 0.831                                 & 0.657                                 & {\color[HTML]{000000} {\underline{0.450}}}    & {\color[HTML]{000000} {\underline{0.462}}}    & 1.161                                 & 0.791                              & 0.515                                 & 0.511                                 & 3.647         & 1.625                    \\ \midrule
\multicolumn{1}{c|}{}                              & \multicolumn{1}{c|}{96}  & {\color[HTML]{000000} \textbf{0.165}} & 0.274                                 & {\color[HTML]{000000} {\underline{0.168}}}    & {\color[HTML]{000000} \textbf{0.272}} & 0.197                                 & 0.282                                 & 0.198                                 & {\color[HTML]{000000} {\underline{0.274}}}    & 0.211                                 & 0.305                              & 0.201                                 & 0.317                                 & 0.274         & 0.368                    \\
\multicolumn{1}{c|}{}                              & \multicolumn{1}{c|}{192} & {\color[HTML]{000000} \textbf{0.184}} & 0.292                                 & {\color[HTML]{000000} {\underline{0.184}}}    & 0.289                                 & 0.196                                 & {\color[HTML]{000000} {\underline{0.285}}}    & 0.197                                 & {\color[HTML]{000000} \textbf{0.277}} & 0.225                                 & 0.319                              & 0.222                                 & 0.334                                 & 0.296         & 0.386                    \\
\multicolumn{1}{c|}{}                              & \multicolumn{1}{c|}{336} & {\color[HTML]{000000} \textbf{0.195}} & 0.302                                 & {\color[HTML]{000000} {\underline{0.198}}}    & {\color[HTML]{000000} {\underline{0.300}}}    & 0.209                                 & 0.301                                 & 0.211                                 & {\color[HTML]{000000} \textbf{0.292}} & 0.247                                 & 0.340                              & 0.231                                 & 0.338                                 & 0.300         & 0.394                    \\
\multicolumn{1}{c|}{\multirow{-4}{*}{{Electricity}}} & \multicolumn{1}{c|}{720} & {\color[HTML]{000000} {\underline{0.231}}}    & 0.332                                 & {\color[HTML]{000000} \textbf{0.220}} & {\color[HTML]{000000} \textbf{0.320}} & 0.245                                 & 0.333                                 & 0.253                                 & {\color[HTML]{000000} {\underline{0.325}}}    & 0.287                                 & 0.373                              & 0.254                                 & 0.361                                 & 0.373         & 0.439         \\ \midrule
\multicolumn{1}{c|}{}                              & \multicolumn{1}{c|}{96}  & 0.102                                 & 0.230                                 & 0.107                                 & 0.234                                 & {\color[HTML]{000000} {\underline{0.088}}}    & {\color[HTML]{000000} {\underline{0.218}}}    & {\color[HTML]{000000} \textbf{0.088}} & {\color[HTML]{000000} \textbf{0.205}} & 0.267                                 & 0.378                              & 0.197                                 & 0.323                                 & 0.847         & 0.752                    \\
\multicolumn{1}{c|}{}                              & \multicolumn{1}{c|}{192} & 0.195                                 & 0.317                                 & 0.226                                 & 0.344                                 & {\color[HTML]{000000} \textbf{0.176}} & {\color[HTML]{000000} {\underline{ 0.315} }}    & {\color[HTML]{000000} {\underline{0.177}}}    & {\color[HTML]{000000} \textbf{0.297}} & 0.590                                 & 0.578                              & 0.300                                 & 0.369                                 & 1.204         & 0.895                   \\
\multicolumn{1}{c|}{}                              & \multicolumn{1}{c|}{336} & 0.359                                 & 0.436                                 & 0.367                                 & 0.448                                 & {\color[HTML]{000000} \textbf{0.313}} & {\color[HTML]{000000} {\underline{0.427}}}    & {\color[HTML]{000000} {\underline{0.323}}}    & {\color[HTML]{000000} \textbf{0.409}} & 0.939                                 & 0.749                              & 0.509                                 & 0.524                                 & 1.672         & 1.036                    \\
\multicolumn{1}{c|}{\multirow{-4}{*}{{Exchange}}}    & \multicolumn{1}{c|}{720} & 0.940                                 & 0.738                                 & 0.964                                 & 0.746                                 & {\color[HTML]{000000}\textbf{0.839}} & {\color[HTML]{000000} \textbf{0.695}} & {\color[HTML]{000000} {\underline{0.923}}}    & {\color[HTML]{000000} {\underline{0.725}}}    & 1.107                                 & 0.834                              & 1.447                                 & 0.941                                 & 2.478         & 1.310                    \\ \toprule
\multicolumn{2}{c|}{Avg Rank}                                                 & \multicolumn{2}{c|}{{\color[HTML]{000000} \textbf{1.813}}}                    & \multicolumn{2}{c|}{{\color[HTML]{000000} {\underline{2.750}}}}                       & \multicolumn{2}{c|}{3.563}                                                    & \multicolumn{2}{c|}{{\color[HTML]{000000}2.813}}                             & \multicolumn{2}{c|}{5.313}                                                 & \multicolumn{2}{c|}{4.750}                                                    & \multicolumn{2}{c}{7.000}              \\ \bottomrule

\end{tabular}
}
\caption{ Forecast results with 96 review window and prediction length $\{96,192,336,720 \}$. The best result is represented in bold, followed by underline.}
\label{tab: tab1}
\end{table*}

\section{Experiments}
\subsection{Datasets}
To evaluate the advanced capabilities of MSGNet in time series forecasting, we conducted experiments on 8 datasets, namely Flight, Weather, ETT (h1, h2, m1, m2)~\cite{zhou2021informer}, Exchange-Rate~\cite{lai2018modeling} and Electricity. With the exception of the Flight dataset, all these datasets are commonly used in existing literature. The Flight dataset's raw data is sourced from the OpenSky official website\footnote{https://opensky-network.org/}, and it includes flight data related to the COVID-19 pandemic. In Figure 1 and 2 of Appendix, we visualize the changes in flight data during this period. Notably, the flights were significantly affected by the pandemic, resulting in out-of-distribution (OOD) samples for all deep learning models. This provides us with an opportunity to assess the robustness of the proposed model against OOD samples.

\subsection{Baselines}
We have chosen six time series forecasting methods for comparison, encompassing models such as Informer~\cite{zhou2021informer}, and Autoformer~\cite{wu2021autoformer}, which are based on transformer architectures. Furthermore, we included MTGnn~\cite{wu2020connecting}, which relies on graph convolution, as well as DLinear and NLinear~\cite{zeng2023transformers}, which are linear models. Lastly, we considered TimesNet~\cite{wu2023timesnet}, which is based on periodic decomposition and currently holds the state-of-the-art performance.

\subsection{Experimental Setups}
The experiment was conducted using an NVIDIA GeForce RTX 3090 24GB GPU, with the Mean Squared Error (MSE) used as the training loss function. The review window size of all models was set to $L=96$ (for fair comparison), and the prediction lengths were $T=\left\{96,192,336,720\right\}$. It should be noted that our model can achieve better performance with longer review windows (see Appendix). These settings were applied to all models. The initial learning rate was $LR=0.0001$, batch size was $Batch=32$, and the number of epochs was $Epochs=10$,  and early termination was used where applicable. For more details on hyperparameter settings of our model, please refer to Appendix. (0.7, 0.1, 0.2) or (0.6, 0.2, 0.2) of the data are used as training, validation, and test data, respectively. As for baselines, relevant data from the papers~\cite{wu2023timesnet} or official code~\cite{wu2020connecting} was utilized.

\begin{figure}[htb!]
\centerline{\includegraphics[scale=0.21]{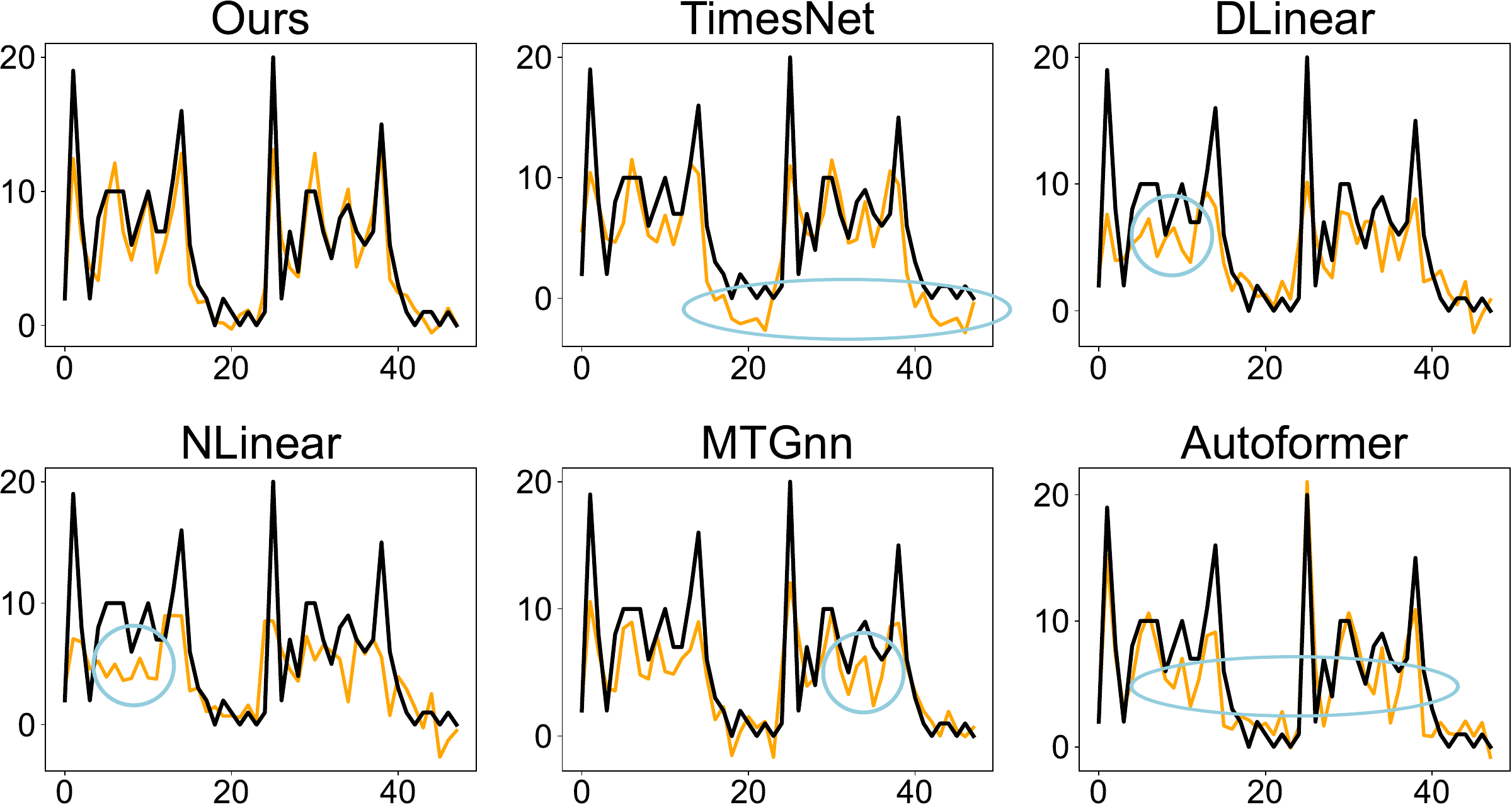}}
\caption{Visualization of Flight prediction results: black lines for true values, orange lines for predicted values, and blue markings indicating significant deviations.}
\label{fig: visualization}
\end{figure}

\subsection{Results and Analysis}
Table~\ref{tab: tab1} summarizes the predictive performance of all methods on 8 datasets, showcasing MSGNet's excellent results. Specifically, regarding the average Mean Squared Error (MSE) with different prediction lengths, it achieved the best performance on 5 datasets and the second-best performance on 2 datasets. In the case of the Flight dataset, MSGNet outperformed TimesNet (current SOTA), reducing MSE and MAE by $21.5\%$ (from 0.265 to 0.208) and $13.7\%$ (from 0.372 to 0.321) in average, respectively. Although TimesNet uses multi-scale information, it adopts a pure computer vision model to capture inter and intra-series correlations, which is not very effective for time series data. Autoformer demonstrated outstanding performance on the Flight dataset, likely attributed to its established autocorrelation mechanism. Nevertheless, even with GNN-based inter-series correlation modeling, MTGnn remained significantly weaker than our model due to a lack of attention to different scales. Furthermore, we assessed the model's generalization ability by calculating its average rank across all datasets. Remarkably, MSGNet outperforms other models on average ranking.


MSGNet's excellence is evident in Figure~\ref{fig: visualization}, as it closely mirrors the ground truth, while other models suffer pronounced performance dips during specific time periods. The depicted peaks and troughs in the figure align with crucial flight data events, trends, or periodic dynamics. The inability of other models to accurately follow these variations likely stems from architecture constraints, hindering their capacity to grasp multi-scale patterns, sudden shifts, or intricate inter-series and intra-series correlations. 

\subsection{Visualization of Learned Inter-series Correlation}


Figure~\ref{fig: adj} illustrates three learned adjacency matrices for distinct time scales. In this instance, our model identifies three significant scales, corresponding to 24, 6, and 4 hours, respectively. As depicted in this showcase, our model learns different adaptive adjacency matrices for various scales, effectively capturing the interactions between airports in the flight data set. For instance, in the case of Airport 6, which is positioned at a greater distance from Airports 0, 1, and 3, it exerts a substantial influence on these three airports primarily over an extended time scale (24 hours). However, the impact diminishes notably as the adjacency matrix values decrease during subsequent shorter periods (6 and 4 hours). On the other hand, airports 0, 3, and 5, which are closer in distance, exhibit stronger mutual influence at shorter time scales. These observations mirror real-life scenarios, indicating that there might be stronger spatial correlations between flights at certain time scales, linked to their physical proximity. 

\begin{figure}[htb!]

\centerline{\includegraphics[scale=0.37]{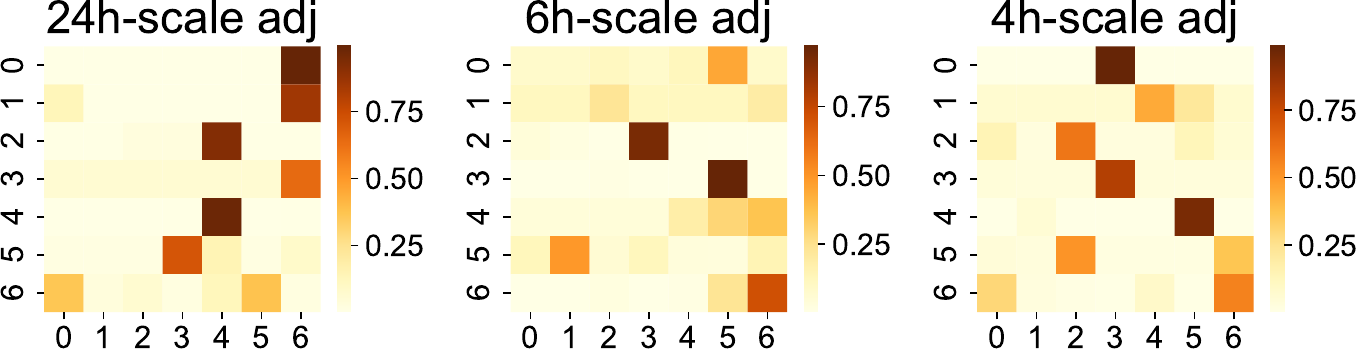}}
\centerline{\includegraphics[ width=8.1cm, height=4.1cm]{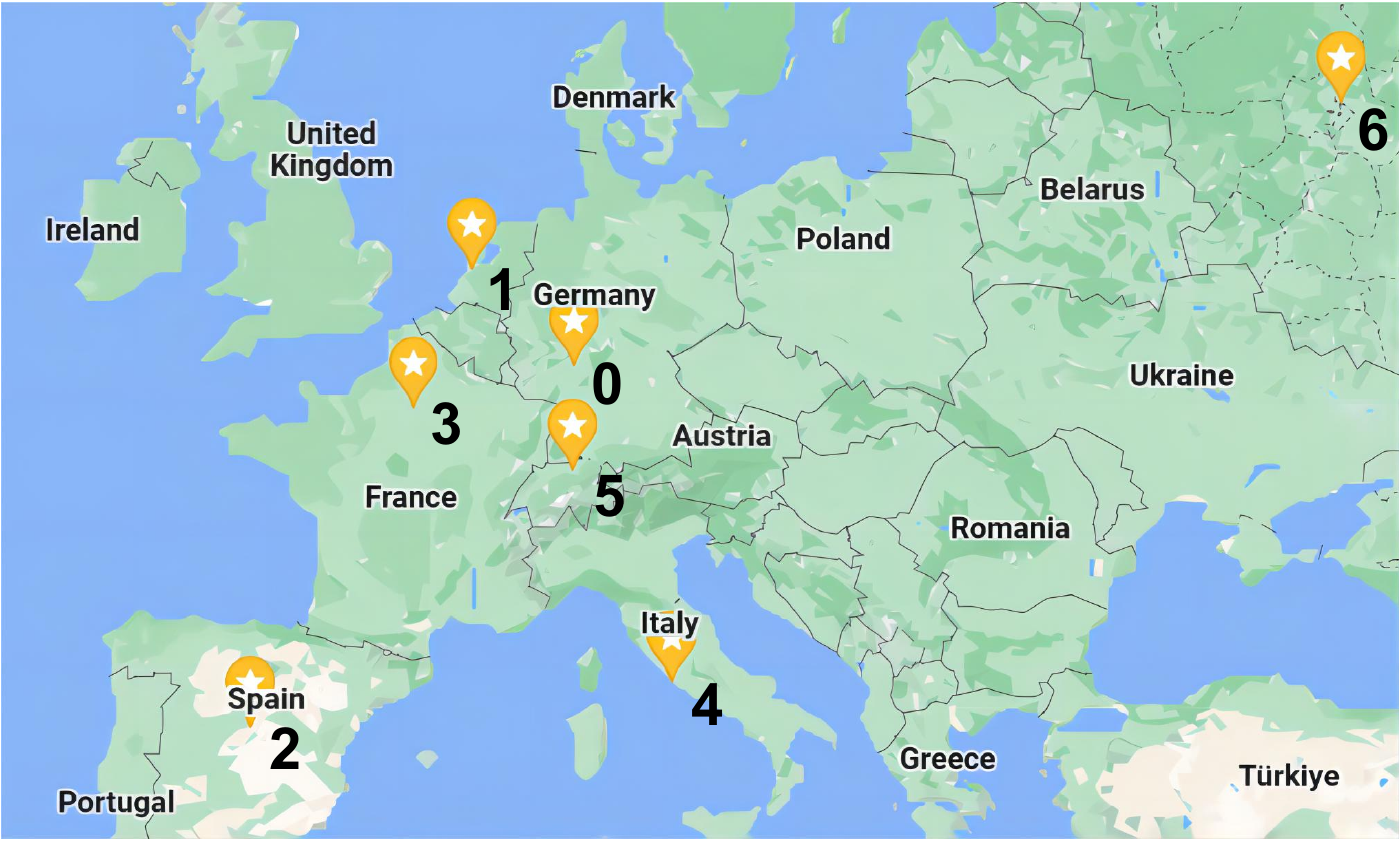}}
\caption{Learned adjacency matrices (24h, 6h, and 4h of the first layer) and airport map for Flight dataset.}
\label{fig: adj}
\end{figure}

\begin{table}[htb!]
\centerline{
\small
\tabcolsep=0.18cm
\renewcommand\arraystretch{0.98}
\begin{tabular}{ccccccc}
\toprule
\multicolumn{1}{c}{Dataset}                                               & \multicolumn{2}{c}{Flight}                                                &
\multicolumn{2}{c}{Weather}                                               &  
\multicolumn{2}{c}{ETTm2}                                               \\ \midrule
\multicolumn{1}{c}{Metric}    & MSE     & MAE    & MSE    & MAE     & MSE    & MAE     \\ 
                      \toprule
\multicolumn{1}{c|}{MSGNet}  & {\textbf{0.195}} & {\textbf{0.311}} & {\textbf{0.218}} & {\textbf{0.255}} & {\textbf{0.245}} & {\textbf{0.304}} \\
\multicolumn{1}{c|}{w/o-AdapG}   & 0.302      & 0.401  & 0.232 & 0.270   & 0.253    & 0.313                       \\
\multicolumn{1}{c|}{w/o-MG}  & 0.213   & 0.331  & 0.226 & 0.261   & 0.250    & 0.307                                 \\
\multicolumn{1}{c|}{w/o-A}   & 0.198   & 0.314  & 0.224 & 0.259     & 0.247     & 0.306                                 \\
\multicolumn{1}{c|}{w/o-Mix} & 0.202     & 0.318  &  0.224&    0.260& 0.247       & 0.304                                 \\
\multicolumn{1}{c|}{TimesNet}   & 0.263    & 0.372    & 0.226 & 0.263 & 0.254   & 0.309                                         \\ \bottomrule
\end{tabular}
}
 \caption{Ablation analysis of Flight, Weather and ETTm2 datasets. Results represent the average error of prediction length $\{96,336\}$, with the best performance highlighted in bold  black. }
 \label{tab: ablation}
\end{table}

\begin{table*}[h]
\centerline{
\resizebox{0.90\linewidth}{!}{
\begin{tabular}{cl|cccccccccccc}
\toprule
\multicolumn{2}{c}{Models}        & \multicolumn{2}{c}{\textbf{Ours}}                                             & \multicolumn{2}{c}{TimesNet} & \multicolumn{2}{c}{DLinear} & \multicolumn{2}{c}{NLinear} & \multicolumn{2}{c}{MTGnn} & \multicolumn{2}{c}{Autoformer} \\ \midrule
\multicolumn{2}{c}{Metric}        & MSE                                   & MAE                                   & MSE           & MAE          & MSE          & MAE          & MSE          & MAE          & MSE         & MAE         & MSE            & MAE           \\ \toprule
\multicolumn{2}{c|}{Flight(7:1:2)} & { \textbf{0.208}} & { \textbf{0.321}} & 0.265         & 0.372        & 0.233        & 0.345        & 0.285        & 0.388        & 0.280       & 0.378       & 0.238          & 0.344         \\
\multicolumn{2}{c|}{Flight(4:4:2)} & { \textbf{0.252}} & { \textbf{0.366}} & 0.335         & 0.426        & 0.332        & 0.448        & 0.365        & 0.447        & 0.407       & 0.501       & 0.307          & 0.424         \\
\multicolumn{2}{c|}{Decrease(\%)}  & { \textbf{21.29}} & { \textbf{13.80}} & 26.47         & 14.32        & 42.29        & 29.87        & 28.19        & 15.17        & 45.74       & 32.52       & 29.17          & 23.09         \\ \bottomrule
\end{tabular}
}
}
 \caption{Generalization test under COVID-19 influence: mean error for all prediction lengths, black bold indicates best performance. $\mathbf{Decrease}$ shows the percentage of performance reduction after partition modification.}
 \label{tab:covid}
\end{table*}

\subsection{Ablation Analysis}

We conducted ablation testing to verify the effectiveness of the MSGNet design. We considered 5 ablation methods and evaluated them on 3 datasets. The following will explain the variants of its implementation:

\begin{enumerate}
\item \textbf{w/o-AdapG:} We removed the adaptive graph convolutional layer (graph learning) from the model.
\item \textbf{w/o-MG:} We removed multi-scale graph convolution and only used a shared graph convolution layer to learn the overall inter-series dependencies.
\item \textbf{w/o-A:} We removed multi-head self-attention and eliminated intra-series correlation learning.
\item \textbf{w/o-Mix:} We replaced the mixed hop convolution method with the traditional convolution method~\cite{kipf2016semi}.
\end{enumerate}

Table \ref{tab: ablation} shows the results of the ablation study. Specifically, we have summarized the following four improvements:

    \begin{enumerate}
    \item \textbf{Improvement of graph learning layer: } After removing the graph structure, the performance of the model showed a significant decrease. This indicates that learning the inter-series correlation between variables is crucial in predicting multivariate time series.

    \item \textbf{Improvement of multi-scale graph learning: } Based on the results of the variant w/o-MG, it can be concluded that the multi-scale graph learning method significantly contributes to improving model performance. This finding suggests that there exist varying inter-series correlations among different time series at different scales.
    
    \item \textbf{Improvement of MHA layer: } Examining the results from w/o-A and TimesNet, it becomes apparent that employing multi-head self-attention yields marginal enhancements in performance.  

    \item \textbf{Improvement of mix-hop convolution: } The results of variant w/o-Mix indicate that the mix-hop convolution method is effective in improving the model's performance as w/o-Mix is slightly worse than MSGNet.
      \end{enumerate}

\subsection{Generalization Capabilities}

To verify the impact of the epidemic on flight predictions and the performance of MSGNet in resisting external influences, we designed a new ablation test by modifying the partitioning of the Flight dataset to 4:4:2. This design preserved the same test set while limiting the training set to data before the outbreak of the epidemic, and using subsequent data as validation and testing sets. The specific results are shown in Table~\ref{tab:covid}. By capturing multi-scale inter-series correlations, MSGNet not only achieved the best performance under two different data partitions but also exhibited the least performance degradation and strongest resistance to external influences. The results demonstrate that MSGNet possesses a robust generalization capability to out-of-distribution (OOD) samples. We hypothesize that this strength is attributed to MSGNet's ability to capture multiple inter-series correlations, some of which continue to be effective even under OOD samples of multivariate time series. This hypothesis is further supported by the performance of TimesNet, which exhibits a relatively small performance drop, ranking second after our method. It is worth noting that TimesNet also utilizes multi-scale information, similar to our approach.


\section{Conclusion}

In this paper, we introduced MSGNet, a novel framework designed to address the limitations of existing deep learning models in time series analysis. Our approach leverages periodicity as the time scale source to capture diverse inter-series correlations across different time scales. Through extensive experiments on various real-world datasets, we demonstrated that MSGNet outperforms existing models in forecasting accuracy and captures intricate interdependencies among multiple time series. Our findings underscore the importance of discerning the varying inter-series correlation of different time scales in the analysis of time series data. 

\section*{Acknowledgements}

This work was supported by the Natural Science Foundation of Sichuan Province (No. 2023NSFSC1423), the Fundamental Research Funds for the Central Universities, the open fund of state key laboratory of public big data (No. PBD2023-09), and the National Natural Science Foundation of China (No. 62206192). We also acknowledge the generous contributions of dataset donors.

\section{Appendix}

\section{1\quad A Mixture-of-Experts Perspective of MSGNet}

\subsection{1.1\quad Background: Mixture of Experts}

Mixture of experts is a well-established technique in the field of ensemble learning`\cite{jacobs1991adaptive}. It simultaneously trains a collection of expert models, denoted as ${f}_{i = 1, \cdots, k}$, which are designed to specialize in different input cases. The outputs generated by these experts are combined using a linear combination, where a "gating function" $g = [g_1, \ldots, g_k]$ determines the relative importance of each expert in the final decision-making process:
\begin{equation}
\text{MoE}(\boldsymbol{x}) = \sum_{i=1}^k g_i(\boldsymbol{x}) \cdot f_i(\boldsymbol{x}). 
\label{eq:MoE}
\end{equation}
The gating function, commonly implemented as a neural network, parameterizes the contribution of each expert. 

\subsection{1.2\quad Multi-Scale Graph Convolution: a Mixture-of-Experts Perspective}

For simplicity, we present a simplified form of our multi-scale graph convolution. In each layer, given the input $\boldsymbol{X} \in \mathbb{R}^{N \times c}$, we compute the transformed features as follows:
\begin{equation}
\hat{\boldsymbol{H}}_i = \boldsymbol{\hat{A}}^i \boldsymbol{X} \boldsymbol{W}_i,
\end{equation}
where $\hat{\boldsymbol{H}}_i \in \mathbb{R}^{N \times d}$ represents the $i$-th set of features, $\boldsymbol{\hat{A}}^i \in \mathbb{R}^{N \times N}$ corresponds to the $i$-th adjacency matrix, and $\boldsymbol{W}_i \in \mathbb{R}^{c \times d}$ denotes the learned transformation matrix.

Ignoring other operations in the ScaleGraph block, the output features of a ScaleGraph block are given by:
\begin{equation}
\boldsymbol{Z} \triangleq \text{ScaleGraphBlock}(\boldsymbol{X}) = \sum^k_{i=1} \hat{a}_i \hat{\boldsymbol{H}}_i,
\end{equation}
where $k$ represents the number of graph convolutions (scales). $\hat{a}_i$ serves as a gating function, similar to $g_i$ in Equation~\ref{eq:MoE}, and $\hat{\boldsymbol{H}}_i$ corresponds to the expert $f_i(\boldsymbol{X})$. It should be noted that $\hat{a}_i$ is also dependent on $\boldsymbol{X}$ since it is computed based on the amplitude of the time series' Fourier transformation.

If we set $k = 1$, it is evident that the model with only one graph convolution simplifies to $\boldsymbol{Z} = \boldsymbol{H}_1$, representing a single expert model. Numerous theoretical studies, such as those discussed in ~\cite{chen2022towards}, provide evidence that Mixture of Experts (MoE) outperforms single expert models. These studies highlight the advantages of leveraging multiple experts to enhance the model's capability in capturing complex patterns, leveraging diverse specialized knowledge, and achieving superior performance compared to a single expert approach.

\section{2\quad Representation Power Analysis}

We follow~\citet{abu2019mixhop} to analyze the representation power of different time series forecasting models. Firstly, we assume that there exists inter-series correlation between different time series, and the multivariate time series can be represented as a graph signal located on the graph $\mathcal{G} = \{ \mathcal{V}, \mathcal{E} \}$, where $\mathcal{V}$ is a set of nodes with $|\mathcal{V}| = N$, representing the number of time series, and $\mathcal{E} \subseteq \mathcal{V} \times \mathcal{V}$ is a set of edges. We can define the adjacency matrix $\boldsymbol{\hat{A}} \in \mathbb{R}^{N \times N}$ to represent the correlation between the $N$ time series. The adjacency matrix may be unknown; however, the model is expected to learn the features on the graph.

Similar to \cite{abu2019mixhop}, we analyze whether the model can learn the \textit{Two-hop Delta Operator} feature:

\begin{definition}
Representing Two-hop Delta Operator: A model is capable of representing a two-hop Delta Operator if there exists a setting of its parameters and an injective mapping $f$, such that the output of the network becomes
\begin{equation}
f\left(\sigma\left(\boldsymbol{\hat{A}X}\right) - \sigma\left(\boldsymbol{\hat{A}}^2\boldsymbol{X}\right)\right),
\end{equation}
given any adjacency matrix $\boldsymbol{\hat{A}}$, features $\boldsymbol{X}$, and activation function $\sigma$.
\end{definition}

The majority of time series prediction methods primarily concentrate on capturing the intra-series correlation of time series. Typical models, such as CNN~\cite{wu2023timesnet}, RNN~\cite{salinas2020deepar}, and transformer~\cite{zhou2021informer,wu2021autoformer} architectures, are employed to capture this correlation within individual time series. Meanwhile, the values of distinct time series at a given time instance are regarded as temporal features. These features are commonly transformed using MLPs, mapping them to a different space. Thus, when examining the feature dimension, various time series modeling methods essentially differ in terms of their utilization of distinct MLP parameter sharing strategies. 
From the time series variable dimension perspective, the output of an $l$-layer model without graph modeling can be represented as:
\begin{equation}
\sigma\left(\boldsymbol{W}^{(l-1)}(\sigma(\boldsymbol{W}^{(l-2)}\cdot \cdot \cdot \sigma(W^{(0)}\boldsymbol{X})\right),
\label{eq:nograph}
\end{equation}
where $\boldsymbol{W}^{(i)} \in \mathbb{R}^{N^{(i)} \times N^{(i+1)}}$ is a trainable weight matrix, and $\sigma$ denotes an element-wise activation function.

\begin{theorem}
The model defined by Equation~\ref{eq:nograph} is not capable of representing two-hop Delta Operators.
\end{theorem}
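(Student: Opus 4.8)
The plan is to exploit a structural feature of Equation~\ref{eq:nograph}: it never multiplies the signal by the adjacency matrix $\boldsymbol{\hat{A}}$ at all, so its output is completely insensitive to $\boldsymbol{\hat{A}}$, whereas a two-hop Delta Operator is not. Concretely, for any fixed choice of the weight matrices $\boldsymbol{W}^{(0)},\dots,\boldsymbol{W}^{(l-1)}$ and any fixed element-wise activation $\sigma$, Equation~\ref{eq:nograph} defines a map $g$ with $g(\boldsymbol{X}) = \sigma(\boldsymbol{W}^{(l-1)}(\sigma(\cdots\sigma(\boldsymbol{W}^{(0)}\boldsymbol{X}))))$; since $\boldsymbol{\hat{A}}$ does not appear on the right-hand side, $g(\boldsymbol{X})$ is one and the same value for every adjacency matrix $\boldsymbol{\hat{A}}$.

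I would then argue by contradiction. Suppose the model represents a two-hop Delta Operator, i.e. there are parameters and an injective $f$ with $g(\boldsymbol{X}) = f(\sigma(\boldsymbol{\hat{A}}\boldsymbol{X}) - \sigma(\boldsymbol{\hat{A}}^2\boldsymbol{X}))$ for all $\boldsymbol{\hat{A}}$, $\boldsymbol{X}$, and $\sigma$. Take $\sigma$ to be the identity and fix any nonzero feature matrix $\boldsymbol{X}$. Evaluating the required identity at $\boldsymbol{\hat{A}} = \boldsymbol{I}$ gives target $\boldsymbol{I}\boldsymbol{X} - \boldsymbol{I}^2\boldsymbol{X} = \boldsymbol{0}$, while evaluating it at $\boldsymbol{\hat{A}} = 2\boldsymbol{I}$ gives $2\boldsymbol{X} - 4\boldsymbol{X} = -2\boldsymbol{X} \neq \boldsymbol{0}$. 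Since the network output $g(\boldsymbol{X})$ is identical in both cases, we obtain $f(\boldsymbol{0}) = g(\boldsymbol{X}) = f(-2\boldsymbol{X})$ with $\boldsymbol{0} \neq -2\boldsymbol{X}$, contradicting the injectivity of $f$. Hence no such parameters and $f$ exist, which is exactly the claim.

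The argument is short, so the remaining work is just bookkeeping. First, one must respect the quantifier order in the definition: the parameters and $f$ are fixed first, and only afterwards are $\boldsymbol{\hat{A}},\boldsymbol{X},\sigma$ universally quantified, which is precisely what permits the adversarial two-adjacency-matrix comparison above. Second, one should exhibit a genuinely separating pair; $\boldsymbol{I}$ and $2\boldsymbol{I}$ suffice, but the cleanest robust statement is that any $\boldsymbol{\hat{A}}_0$ with $\sigma(\boldsymbol{\hat{A}}_0\boldsymbol{X}) - \sigma(\boldsymbol{\hat{A}}_0^2\boldsymbol{X}) \neq \boldsymbol{0}$, which plainly exists (e.g. a small perturbation of the identity), works against $\boldsymbol{\hat{A}} = \boldsymbol{I}$. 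Third, the separation needs the mild nondegeneracy $\boldsymbol{X} \neq \boldsymbol{0}$ together with $\sigma(\boldsymbol{0}) = \boldsymbol{0}$ (true for the identity and for ReLU). This also makes transparent why the Mixhop-style update of Equation~\ref{eq:ag}, which explicitly forms the powers $(\boldsymbol{A}^i)^j$, is not subject to this impossibility.
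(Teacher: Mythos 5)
Your proof is correct and follows essentially the same route as the paper's: both observe that the output of Equation~\ref{eq:nograph} is independent of $\boldsymbol{\hat{A}}$, then feed two adjacency matrices with distinct two-hop Delta targets into the assumed identity to contradict the injectivity of $f$. The only cosmetic difference is your choice of separating pair ($\boldsymbol{I}$ versus $2\boldsymbol{I}$, with general nonzero $\boldsymbol{X}$) where the paper uses $\boldsymbol{I}_n$ versus an explicit \emph{normalized} adjacency matrix $\boldsymbol{C}$ with $\boldsymbol{X}=\boldsymbol{I}_n$; if one insists that admissible $\boldsymbol{\hat{A}}$ be normalized, you would simply swap $2\boldsymbol{I}$ for such a $\boldsymbol{C}$, as your own robustness remark already anticipates.
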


\begin{proof}
For the simplicity of the proof, let's assume that $\forall i, N_i = N$. In a particular case when $\sigma(x) = x$ and $\boldsymbol{X} = \boldsymbol{I}_n$, Equation~\eqref{eq:nograph} reduces to $\boldsymbol{W}^*,$ where $\boldsymbol{W}^* = \boldsymbol{W}^{(0)}\boldsymbol{W}^{(1)} \cdot \cdot \cdot \boldsymbol{W}^{(l-1)}$. 

Suppose the network is capable of representing a two-hop Delta Operator. This implies the existence of an injective map $f$ and a value for $\boldsymbol{W}^*$ such that $\forall \boldsymbol{\hat{A}}, \boldsymbol{W}^* = f(\boldsymbol{\hat{A}} - \boldsymbol{\hat{A}}^2)$.
Setting $\boldsymbol{\hat{A}} = I_n$, we find that $\boldsymbol{W}^* = f(\boldsymbol{0})$.

Let $\boldsymbol{C}$ be an arbitrary normalized adjacency matrix with $\boldsymbol{C} - \boldsymbol{C}^2 \neq \boldsymbol{0}$, e.g., 
\begin{equation}
\boldsymbol{C} =
\begin{bmatrix}
0.5 & 0.5 & 0 \\
0 & 0.5 & 0.5 \\
0.5 & 0.5 & 0 \\
\end{bmatrix}
\end{equation}

$\mathbf{D} = \mathbf{C} - \mathbf{C}^2$ is given by:
\begin{equation}
\begin{bmatrix}
0.25 & 0 & -0.25 \\
-0.25 & 0 & 0.25 \\
0.25 & 0 & -0.25 \\
\end{bmatrix}.
\end{equation}
Setting $\boldsymbol{\hat{A}} = \mathbf{C}$, we get $\boldsymbol{W}^* = f(\boldsymbol{D})$. 

The function $f$ is said to be injective provided that for all $\boldsymbol{a}$ and $\boldsymbol{b}$, if $\boldsymbol{a} \neq \boldsymbol{b}$, then $f(\boldsymbol{a}) \neq f(\boldsymbol{b})$. We have $f(\boldsymbol{D}) = f(\boldsymbol{0})$, and $\boldsymbol{D} \neq \boldsymbol{0}$. Thus, $f$ cannot be injective, proving that
the model using Equation~\ref{eq:nograph} cannot represent two-hop Delta Operators.

\end{proof}

Based on our analysis, we have observed that time series forecasting methods lacking advanced graph modeling are constrained to learn a fixed inter-series correlation pattern. This limitation becomes apparent when the inter-series correlation pattern of the target sequence changes, resulting in diminished generalizability and an inability to capture crucial features such as the Two-hop Delta Operator feature.

In contrast, our proposed model, MSGNet, harnesses the power of the mixhop method to learn multiple graph structures at various scales, presenting two significant advantages. Firstly, mixhop inherently possesses the ability to learn diverse features, including the Two-hop Delta Operator feature and general layer-wise neighborhood mixing features~\cite{abu2019mixhop}, enabling a more comprehensive representation of the data. Secondly, in situations where time series experience external disturbances, only specific inter-series correlations at certain scales may undergo changes, while other correlations remain unaffected. The incorporation of more diverse inter-series correlations ensures that MSGNet maintains its generalization performance even on out-of-distribution samples.

\section{3\quad More Details on Experiments}

\subsection{3.1\quad datasets}
\begin{table*}[t]
\renewcommand{\arraystretch}{1.3} 
\centerline{
\resizebox{0.8\linewidth}{!}{
\begin{tabular}{c|ccccc}
\toprule
\multirow{1}{*}{Datasets} & \multirow{1}{*}{Nodes} & \multirow{1}{*}{Input Length} & \multirow{1}{*}{Output Length} & \multirow{1}{*}{Train / test / valid Size} & \multirow{1}{*}{Frequency} \\
                          \midrule
Flight                   & 7                      & 96                            & \{96, 192, 336, 720\}          & (18317, 2633, 5261)           & Hourly                     \\
Weather                  & 21                     & 96                            & \{96, 192, 336, 720\}          & (36792, 5271, 10540)          & 10 minutes                 \\
ETTm1                    & 7                      & 96                            & \{96, 192, 336, 720\}          & (34465, 11521, 11521)         & 15 minutes                 \\
ETTm2                    & 7                      & 96                            & \{96, 192, 336, 720\}          & (34465, 11521, 11521)         & 15 minutes                 \\
ETTh1                    & 7                      & 96                            & \{96, 192, 336, 720\}          & (8545, 2881, 2881)            & Hourly                     \\
ETTh2                    & 7                      & 96                            & \{96, 192, 336, 720\}          & (8545, 2881, 2881)            & Hourly                     \\
Electricity              & 321                    & 96                            & \{96, 192, 336, 720\}          & (18317, 2633, 5261)           & Hourly                     \\
Exchange                 & 8                      & 96                            & \{96, 192, 336, 720\}          & (5120, 665, 1422)             & Daily                      \\ \bottomrule
\end{tabular}
}
}
\caption{Description of all datasets.}
\label{tab: dataset}
\end{table*}

The dataset information used in our experiment is shown in Table \ref{tab: dataset}. For the Flight dataset, we obtained the original data from OpenSky\footnote{https://opensky-network.org/}, which includes crucial information such as flight numbers, departure and destination airports, departure time, landing time, and other important details. To create this dataset, we focused on the flight data changes at seven major airports in Europe, including airports such as EDDF and EHAM, covering the period from January 2019 to December 2021. Additionally, we also gathered flight data specifically related to COVID-19 (after 2020). 

\begin{figure}[h]

\centerline{\includegraphics[scale=0.355]{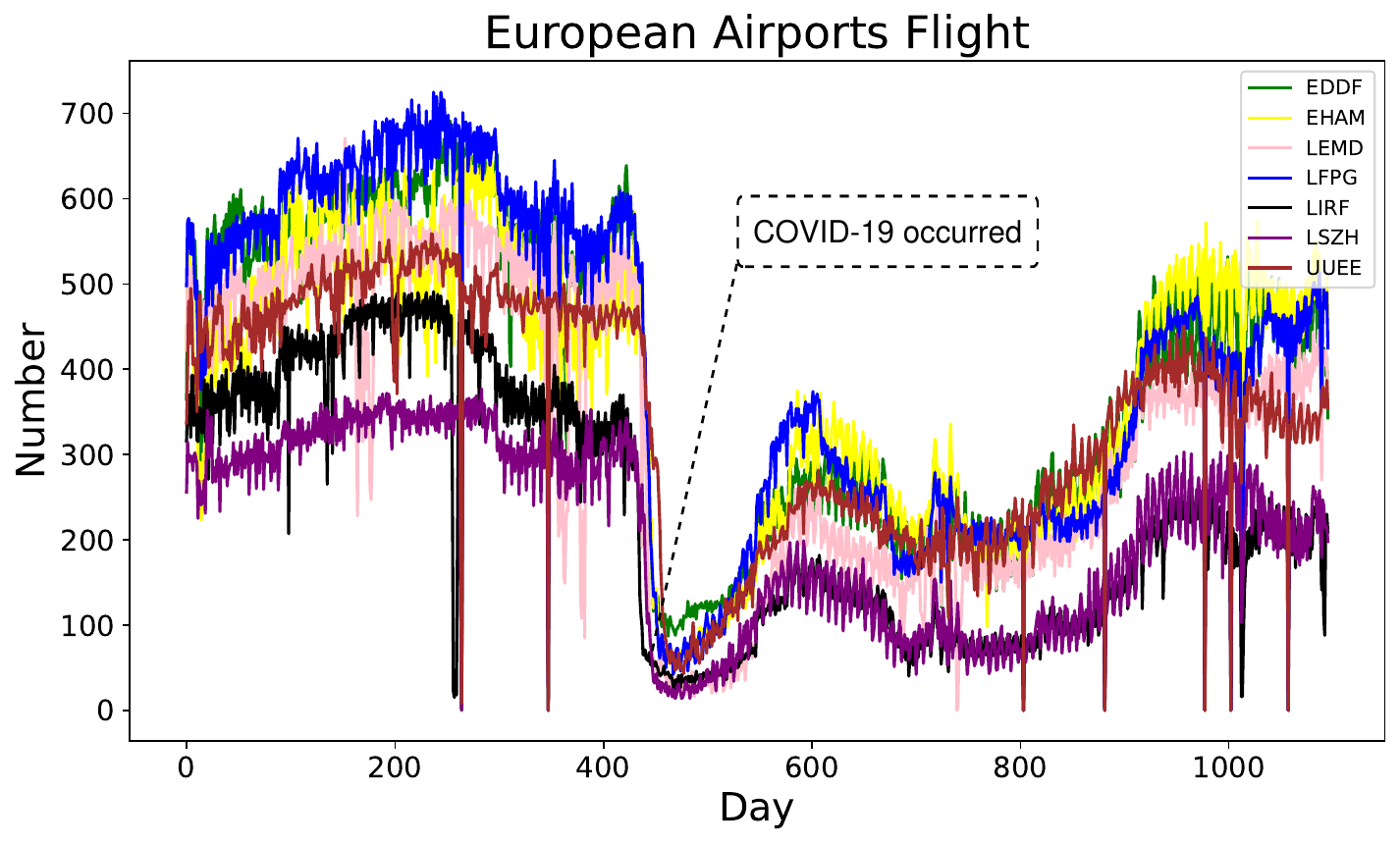}}
\caption{During the onset of the COVID-19 pandemic, there was a drastic decline in the daily flight volume at major airports in Europe, resembling a steep drop-off, which later experienced a gradual recovery.}
\label{fig: flight}
\end{figure}
\begin{figure}[ht]
\centerline{\includegraphics[scale=0.34]{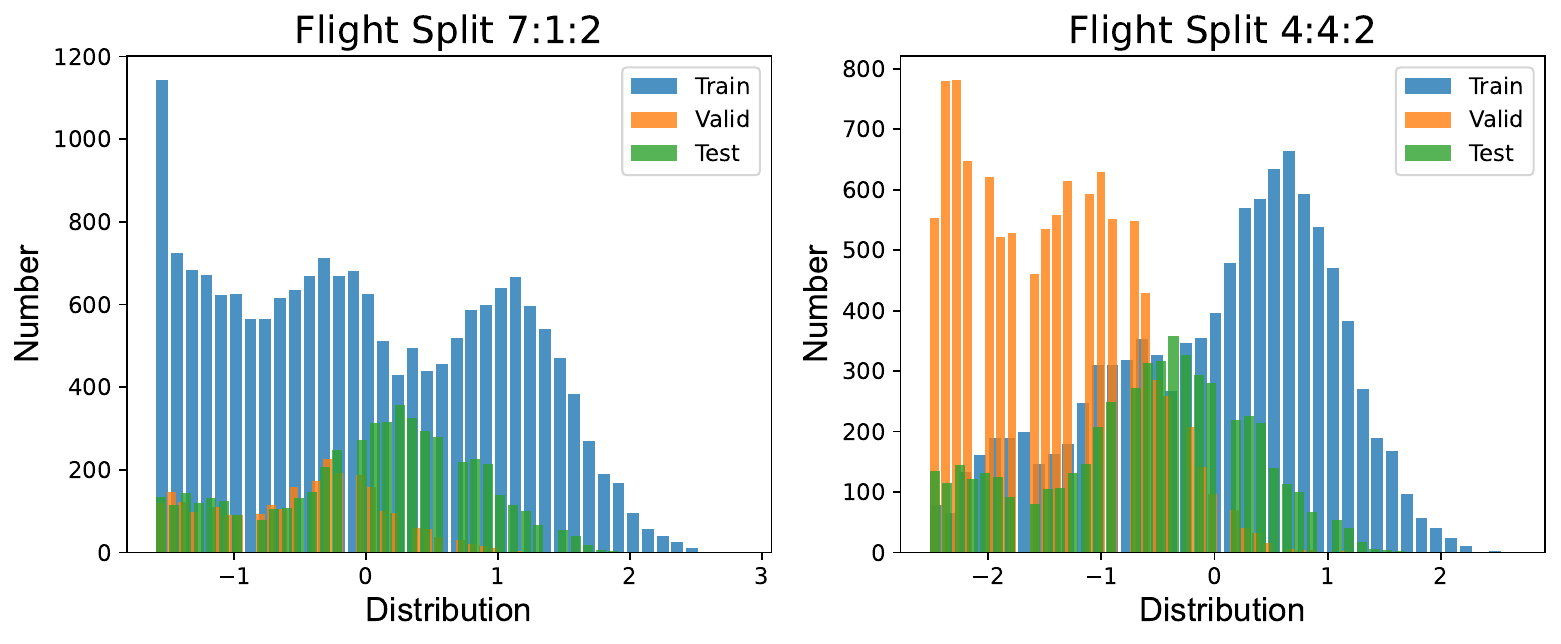}}

\caption{The distribution of data under two different partitions, with the vertical axis representing the number of data points.}
\label{fig: distribution}
\end{figure}

\begin{table}[t]
\renewcommand{\arraystretch}{1.2} 
\centerline{
\begin{tabular}{c|clll}
\toprule
\multirow{1}{*}{Datasets} & \multicolumn{4}{c}{\multirow{1}{*}{Flight / Weather / ECL / Exchange}} \\
     \midrule
Epochs                   & \multicolumn{4}{c}{10}                                                         \\
Batch size               & \multicolumn{4}{c}{32}                                                         \\
Loss                     & \multicolumn{4}{c}{MSE}                                                        \\
Learning rate            & \multicolumn{4}{c}{1e-4}                                                       \\
k                        & \multicolumn{4}{c}{\{3, 5\}}                                              \\
Dim of $\mathbf{E}$               & \multicolumn{4}{c}{\{10, 30, 100\}}                                        \\
$d_{model}$                   & \multicolumn{4}{c}{\{16, 64, 1024\}}                                        \\
ScaleGraph block               & \multicolumn{4}{c}{2}                                                          \\
Mixhop order                   & \multicolumn{4}{c}{2}
                            \\
Heads            & \multicolumn{4}{c}{8}
                         \\
Optimizer                   & \multicolumn{4}{c}{Adam \cite{kingma2014adam}}
                    \\ \bottomrule
\end{tabular}
}
\caption{Hyper-parameters on Flight, Weather, Electricity and Exchange.}
\label{tab: Hyper-Parameters1}
\end{table}

\begin{table}[t]
\renewcommand{\arraystretch}{1.2} 
\centerline{
\resizebox{1\linewidth}{!}{
\begin{tabular}{c|clll}
\toprule
\multirow{1}{*}{Datasets} & \multicolumn{4}{c}{\multirow{1}{*}{ETTm1 / ETTm2 / ETTh1/ ETTh2}} \\
                       \midrule
Epochs                   & \multicolumn{4}{c}{10}                                            \\
Batch size               & \multicolumn{4}{c}{32}                                            \\
Loss                     & \multicolumn{4}{c}{MSE}                                           \\
Learning rate            & \multicolumn{4}{c}{1e-4}                                          \\
k                        & \multicolumn{4}{c}{\{3, 5\}}                                 \\
Dim of $\mathbf{E}$                & \multicolumn{4}{c}{10}                                            \\
$d_{model}$                   & \multicolumn{4}{c}{\{16, 32\}}                             \\
ScaleGraph block               & \multicolumn{4}{c}{\{1, 2\}}                                 \\
Mixhop order                    & \multicolumn{4}{c}{2}                                             \\
Heads            & \multicolumn{4}{c}{8}
                         \\
Optimizer                   & \multicolumn{4}{c}{Adam \cite{kingma2014adam}}
                    \\ \bottomrule
\end{tabular}
}
}
\caption{Hyper-parameters on ETT.}
\label{tab: Hyper-Parameters2}
\end{table}

During the COVID-19 pandemic, our travel has been significantly impacted \cite{aktay2020google}. Naturally, air travel has also experienced substantial disruptions. This characteristic sets it apart from datasets like Weather and makes it suitable for assessing model stability with Out-of-Distribution (OOD) data. In Figure \ref{fig: flight}, we present a visual representation of the flight data changes for 7 major airports. To ensure clarity, we have used daily time granularity. As expected, the COVID-19 outbreak had a profound effect on flight operations.

For the Flight dataset, we conducted two types of partitioning: a 7:1:2 split and a 4:4:2 split, while keeping the same test set. In the second case, the training set does not include data after the outbreak of COVID-19. To ensure consistency, we normalized the training, validation, and test sets using the mean and variance of the training data, respectively. Figure~\ref{fig: distribution} illustrates the distribution histograms of the three sets. In the left graph, when considering COVID-19 factors in the training set, we observed a significant increase in the distribution of low values, reflecting the impact of the epidemic on the training data.  In general, the distributions among the three sets remained relatively similar. Conversely, as shown in the right graph, when the training set did not consider COVID-19 factors, there were notable distribution changes among the three sets.

\subsection{3.2\quad Hyper-Parameters}
We present the hyperparameters of MSGNet experiments on various datasets in Tables \ref{tab: Hyper-Parameters1} and \ref{tab: Hyper-Parameters2}, where $k$ represents the number of scales used. Dim of $E$ represents the dimension embedded in the node vector, taking a value in $\{10, 100\}$. Mixhop order is the depth of propagation in graph convolution.

We conducted an in-depth analysis of key hyperparameters within our model. Figure~\ref{fig: sensitivity} visually demonstrates the model's performance variations across distinct Mixhop orders and scale numbers ($k$), both ranging from 1 to 5. Our assessment encompasses the Flight, ETTh1, ETTh2, and Weather datasets, evaluating the Mean Squared Error (MSE) as the metric, with prediction lengths of $\{96, 192, 336, 720\}$.

Our proposed MSGNet exhibits consistent performance across a range of $k$ and Mixhop order selections. Notably, from Figure~\ref{fig: sensitivity}, we draw the following key observations:
\begin{itemize}
    \item In general, opting for a relatively smaller Mixhop order, such as 2, yields improved performance. This suggests that in each graph convolution layer, individual time series derive information solely from their 2-hop neighbors. This localized correlation structure benefits our model's predictions. In addition, for Flight, the overall impact of Mixhop order is small and can maintain stable performance. 
    \item Increasing the value of $k$ enhances the predictive performance. This effect can likely be attributed to the larger $k$ broadening the learned inter-series correlations, thereby promoting more diverse and informative predictions. Especially for datasets with multiple obvious scale patterns, increasing $k $ within a certain range can learn more detailed correlations, significantly improving model performance.
\end{itemize}

\begin{figure}[ht]
\centerline{\includegraphics[scale=0.32]{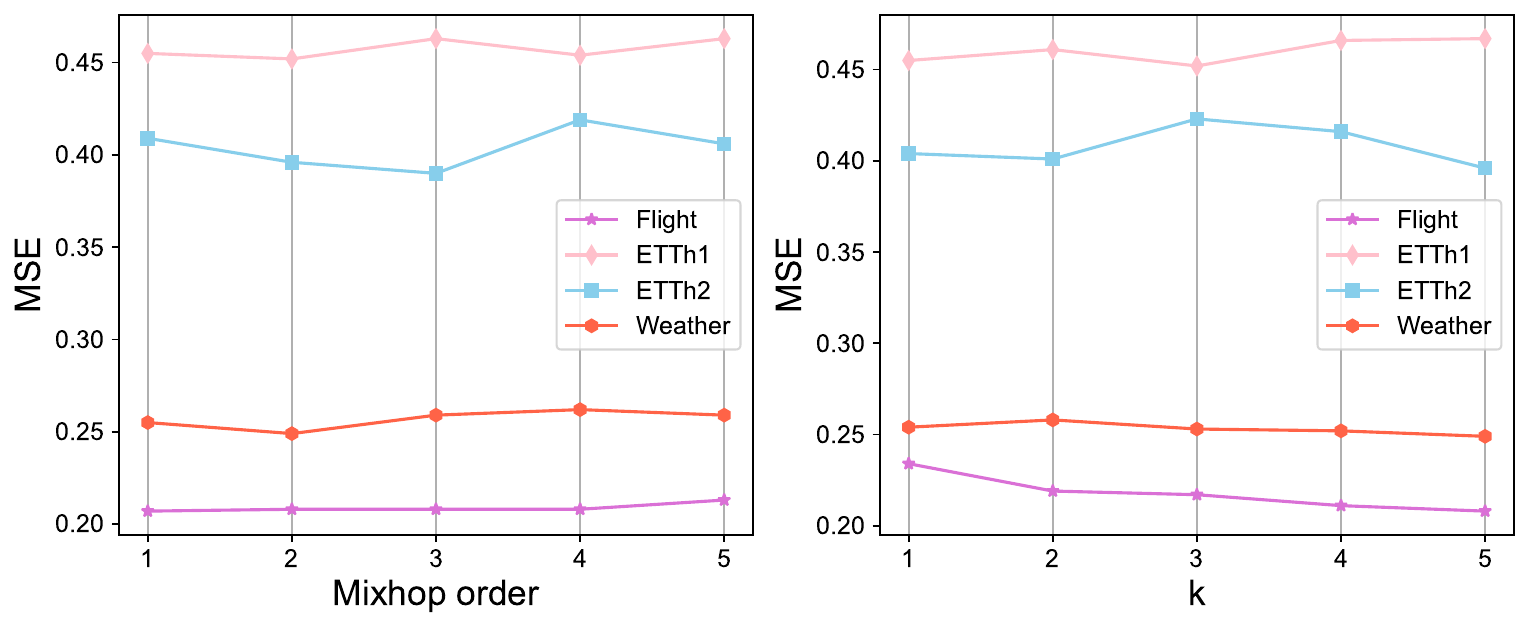}}

\caption{Sensitivity analysis of hyperparameters $k$ and Mixhop order on the Flight, ETTh1, ETTh2 and Weather datasets, showcasing the mean prediction error across different prediction lengths: $\{96, 192, 336, 720\}$.}
\label{fig: sensitivity}
\end{figure}

\begin{figure}[h]
\centerline{\includegraphics[scale=0.25]{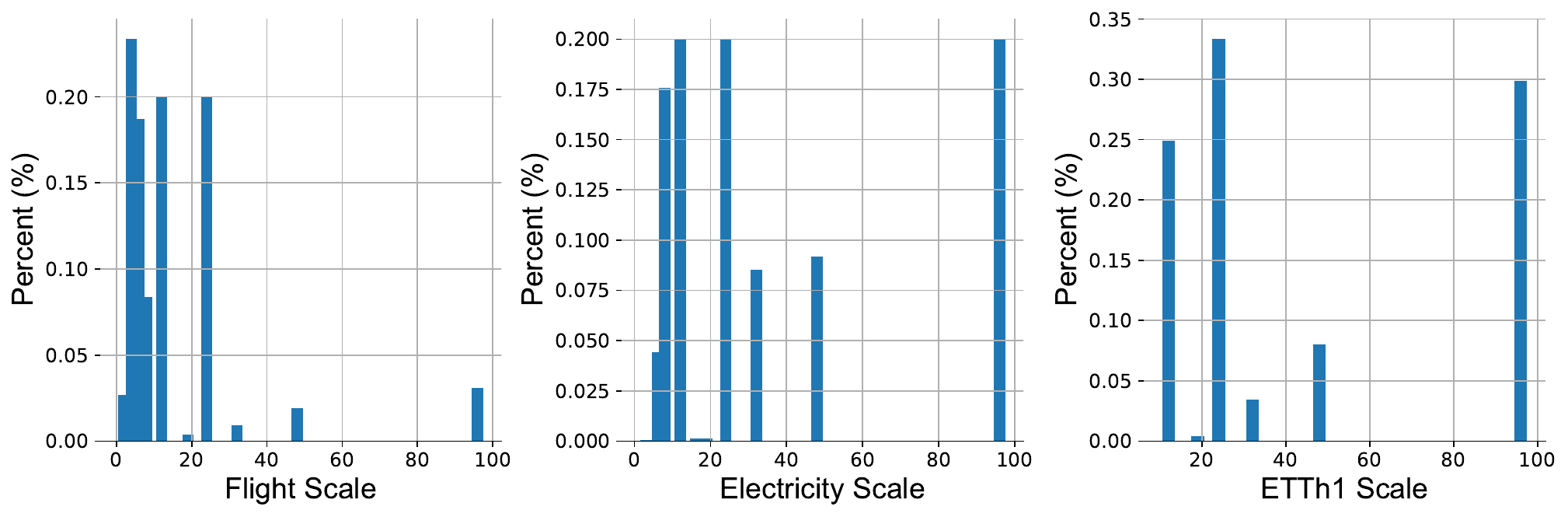}}

\caption{Scale distribution data of some datasets. Obtain $k$ scales each time through FFT, and normalize the proportion of different scales.}
\label{fig: scale}
\end{figure}

\section{4\quad The detected scale}

As depicted in Figure \ref{fig: scale}, MSGNet successfully identified the most significant $k$ frequencies across three datasets during testing. The corresponding scale distributions were also shown. For the Flight dataset, the model consistently captured multiple diverse scales while predicting the future $96$ time steps. These scales included various long and short-term patterns such as 1 day, half day, morning, and more. The observed patterns closely resemble real-world flight patterns, exhibiting strong scaling properties consistent with subjective visualization results. This finding validates that the model effectively learns time dependencies close to reality.

\section{5\quad Performances under Longer Input Sequences}

Generally, the size of the review window influences the types of dependencies the model can learn from historical information. A proficient time series forecasting model should be able to accurately capture dependencies over extended review windows, leading to improved results.

In a prior study~\cite{zeng2023transformers}, it was demonstrated that Transformer-based models tend to display noticeable fluctuations in their performance, leading to either a decline in overall performance or reduced stability with longer review window. These models often achieve their best or near-optimal results when the review window is set to $96$. On the other hand, linear models show a gradual improvement as the review window increases.

We also conducted a similar analysis on the Flight dataset, employing various review windows, namely $\{48, 72, 96, 120, 144, 168, 192, 336, 504, 672, 720\}$, to forecast the values of the subsequent $336$ time steps. The Mean Squared Error (MSE) served as the chosen error function for evaluation. The detailed results can be found in Figure~\ref{fig: diffseq}. 

MSGNet also incorporates a self-attention mechanism for extracting temporal dependencies. However, unlike previous models that might suffer from overfitting temporal noise, MSGNet excels in capturing temporal information effectively. While our model's performance is slightly inferior to the linear model under longer review window in this case, it demonstrates substantial improvement compared to other models. MSGNet overcomes issues of significant rebound and strong fluctuations, displaying an overall trend of decreasing error. This robust behavior showcases MSGNet's capability to reliably extract long sequence time dependencies. We think the reason behind this is MSGNet's transformer operating within the scale itself. Through scale transformation, it shortens long sequences into shorter ones, effectively compensating for the transformer's limitations in capturing long-term sequence correlations of time series. For instance, in a sequence with a length of 720, once a period scale of 24 is identified, it gets reshaped into a scale tensor of 24 $\times$ 30. The transformer then operates on this scale of length 24, instead of the length of 720.

Furthermore, we present a deeper analysis of MSGNet's performance on the ETT (h1, h2, m1, m2) dataset using various review windows in Figure \ref{fig: longerseq}. This aims to validate the efficacy of MSGNet when operating within extended review windows. Notably, it becomes evident that an extended review window yields enhancement in MSGNet's performance. This outcome attests to the role of scale transformation in mitigating challenges encountered by Transformers when dealing with inputs from a more extensive time horizon.



\begin{figure}[h]
\centerline{\includegraphics[scale=0.45]{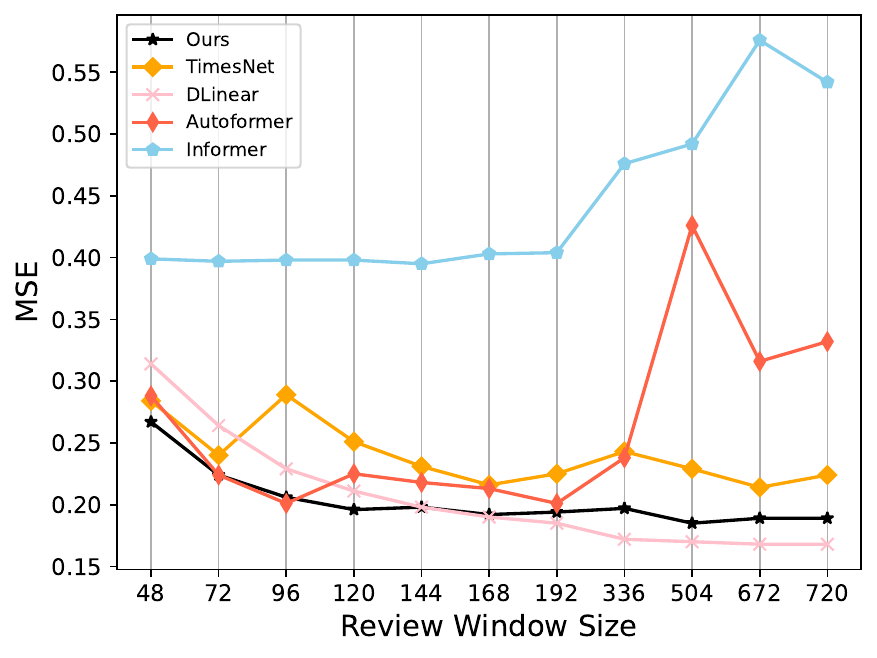}}
\caption{Flight dataset predictions for 336 time steps  with different review windows. We use four other models for comparison.
}
\label{fig: diffseq}
\end{figure}

\begin{figure}[h]
\centerline{\includegraphics[scale=0.45]{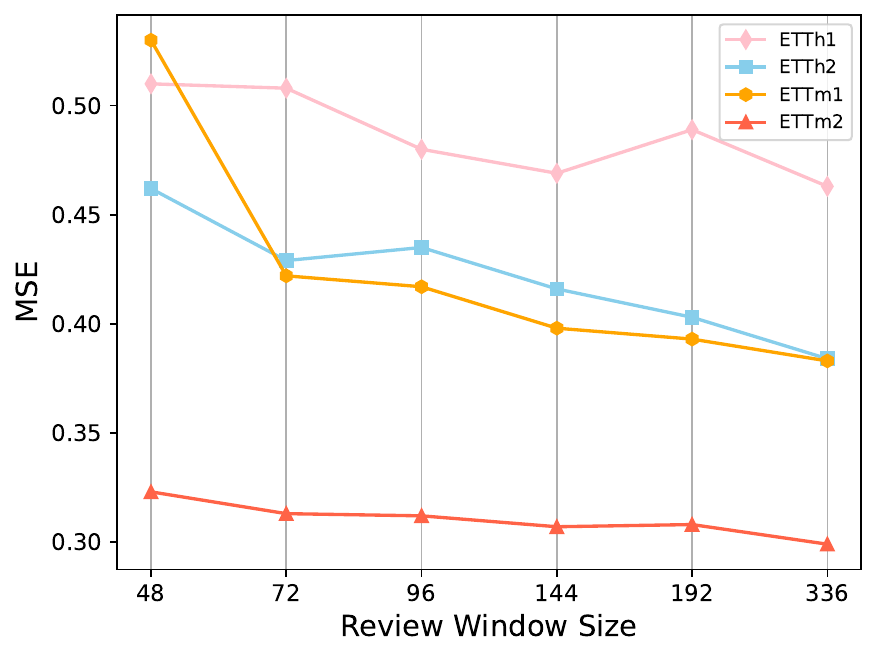}}
\caption{MSGnet's ETT dataset prediction performance for 336 time steps with different review windows.}
\label{fig: longerseq}
\end{figure}

\section{6\quad Computational Efficiency}

In terms of efficiency, we chose to evaluate the models on a more complex Electricity dataset to analyze GPU memory usage, running speed, and MSE ranking for various prediction lengths using different methods. This comprehensive approach enabled us to consider both efficiency and effectiveness thoroughly. To ensure fairness, all models were tested with a Batch size of 32, and the results can found in Table~\ref{tab: efficiency}. Importantly, our model has surpassed TimesNet in operational efficiency, substantially reducing training time while achieving similar time costs across different prediction lengths.

\begin{table}[h]
\renewcommand{\arraystretch}{1.1} 
\centerline{
\resizebox{\linewidth}{!}{
\begin{tabular}{ccclclcl}
\toprule
                                                  &                               & \multicolumn{2}{c}{}                                                                             & \multicolumn{2}{c}{}                                                                                   & \multicolumn{2}{c}{}                           \\
\multirow{-2}{*}{Models}                          & \multirow{-2}{*}{Pred Length} & \multicolumn{2}{c}{\multirow{-2}{*}{\begin{tabular}[c]{@{}c@{}}GPU Memory \\ (GB)\end{tabular}}} & \multicolumn{2}{c}{\multirow{-2}{*}{\begin{tabular}[c]{@{}c@{}}Running Time\\ (s/epoch)\end{tabular}}} & \multicolumn{2}{c}{\multirow{-2}{*}{MSE Rank}} \\ \toprule
\multicolumn{1}{c|}{}                             & \multicolumn{1}{c|}{96}       & \multicolumn{2}{c}{{\color[HTML]{000000} 13.04}}                                                 & \multicolumn{2}{c}{{\color[HTML]{000000} 320.38}}                                                      & \multicolumn{2}{c}{{\textbf{1}}}   \\
\multicolumn{1}{c|}{}                             & \multicolumn{1}{c|}{192}      & \multicolumn{2}{c}{{\color[HTML]{000000} 13.05}}                                                 & \multicolumn{2}{c}{{\color[HTML]{000000} 316.46}}                                                      & \multicolumn{2}{c}{{\textbf{1}}}   \\
\multicolumn{1}{c|}{}                             & \multicolumn{1}{c|}{336}      & \multicolumn{2}{c}{{\color[HTML]{000000} 16.95}}                                                 & \multicolumn{2}{c}{{\color[HTML]{000000} 471.37}}                                                      & \multicolumn{2}{c}{{\textbf{1}}}   \\
\multicolumn{1}{c|}{\multirow{-4}{*}{Ours}}       & \multicolumn{1}{c|}{720}      & \multicolumn{2}{c}{{\color[HTML]{000000} 16.98}}                                                 & \multicolumn{2}{c}{{\color[HTML]{000000} 482.97}}                                                      & \multicolumn{2}{c}{2}                          \\ \midrule
\multicolumn{1}{c|}{}                             & \multicolumn{1}{c|}{96}       & \multicolumn{2}{c}{{\color[HTML]{000000} 8.81}}                                                  & \multicolumn{2}{c}{{\color[HTML]{000000} 748.79}}                                                      & \multicolumn{2}{c}{2}                          \\
\multicolumn{1}{c|}{}                             & \multicolumn{1}{c|}{192}      & \multicolumn{2}{c}{{\color[HTML]{000000} 10.07}}                                                 & \multicolumn{2}{c}{{\color[HTML]{000000} 821.39}}                                                      & \multicolumn{2}{c}{2}                          \\
\multicolumn{1}{c|}{}                             & \multicolumn{1}{c|}{336}      & \multicolumn{2}{c}{{\color[HTML]{000000} 14.48}}                                                 & \multicolumn{2}{c}{{\color[HTML]{000000} 2019.38}}                                                     & \multicolumn{2}{c}{2}                          \\
\multicolumn{1}{c|}{\multirow{-4}{*}{TimesNet}}   & \multicolumn{1}{c|}{720}      & \multicolumn{2}{c}{{\color[HTML]{000000} 20.22}}                                                 & \multicolumn{2}{c}{{\color[HTML]{000000} 3520.99}}                                                     & \multicolumn{2}{c}{{\textbf{1}}}   \\ \midrule
\multicolumn{1}{c|}{}                             & \multicolumn{1}{c|}{96}       & \multicolumn{2}{c}{{\color[HTML]{000000} 2.41}}                                                  & \multicolumn{2}{c}{{\color[HTML]{000000} 4.42}}                                                        & \multicolumn{2}{c}{3}                          \\
\multicolumn{1}{c|}{}                             & \multicolumn{1}{c|}{192}      & \multicolumn{2}{c}{{\color[HTML]{000000} 2.45}}                                                  & \multicolumn{2}{c}{{\color[HTML]{000000} 5.80}}                                                        & \multicolumn{2}{c}{3}                          \\
\multicolumn{1}{c|}{}                             & \multicolumn{1}{c|}{336}      & \multicolumn{2}{c}{{\color[HTML]{000000} 2.49}}                                                  & \multicolumn{2}{c}{{\color[HTML]{000000} 9.28}}                                                        & \multicolumn{2}{c}{3}                          \\
\multicolumn{1}{c|}{\multirow{-4}{*}{DLinear}}    & \multicolumn{1}{c|}{720}      & \multicolumn{2}{c}{{\color[HTML]{000000} 2.62}}                                                  & \multicolumn{2}{c}{{\color[HTML]{000000} 23.57}}                                                       & \multicolumn{2}{c}{3}                          \\ \midrule
\multicolumn{1}{c|}{}                             & \multicolumn{1}{c|}{96}       & \multicolumn{2}{c}{{\color[HTML]{000000} 4.27}}                                                  & \multicolumn{2}{c}{{\color[HTML]{000000} 54.66}}                                                       & \multicolumn{2}{c}{4}                          \\
\multicolumn{1}{c|}{}                             & \multicolumn{1}{c|}{192}      & \multicolumn{2}{c}{{\color[HTML]{000000} {5.25}}}                                         & \multicolumn{2}{c}{{\color[HTML]{000000} 66.89}}                                                       & \multicolumn{2}{c}{4}                          \\
\multicolumn{1}{c|}{}                             & \multicolumn{1}{c|}{336}      & \multicolumn{2}{c}{6.29}                                                                         & \multicolumn{2}{c}{88.19}                                                                              & \multicolumn{2}{c}{4}                          \\
\multicolumn{1}{c|}{\multirow{-4}{*}{Autoformer}} & \multicolumn{1}{c|}{720}      & \multicolumn{2}{c}{9.76}                                                                         & \multicolumn{2}{c}{143.17}                                                                             & \multicolumn{2}{c}{4}                          \\ \bottomrule
\end{tabular}
}
}
\caption{GPU memory, running time, and MSE rank of MSGNet, TimesNet, Dlinear, and Autoformer. }
\label{tab: efficiency}
\end{table}

This is perfectly normal because as the input time increases, MSGNet's MHA continues to operate solely on short time scales, with each scale sharing the same operation among the others. The graph convolution module is also influenced solely by the number of scales as a hyperparameter. Without increasing the number of scales, the computational complexity of these modules remains unchanged. In contrast, for TimesNet, it performs 2D convolutions in both the scale and the number of scales dimensions. Consequently, as the input time lengthens, the convolution operations will correspondingly increase.

It should be noted that our model is computationally heavier compared to two other simpler models, Dlinear and Autoformer. Dlinear is a straightforward linear model, so it's natural that it uses fewer GPU resources. As for Autoformer, we also observed a sharp increase in computation cost with longer input lengths. This is reasonable since its MHA operates on the entire sequence instead of just shorter time scales.

\section{7\quad More Showcases     }
We provide some showcases in Figures \ref{fig: Flight_96} and \ref{fig: ETTh2_336}. Compared to other models, it is obvious that MSGNet can better fit the trend changes and periodicity of data.

\begin{figure}[ht]
\centerline{\includegraphics[scale=0.19]{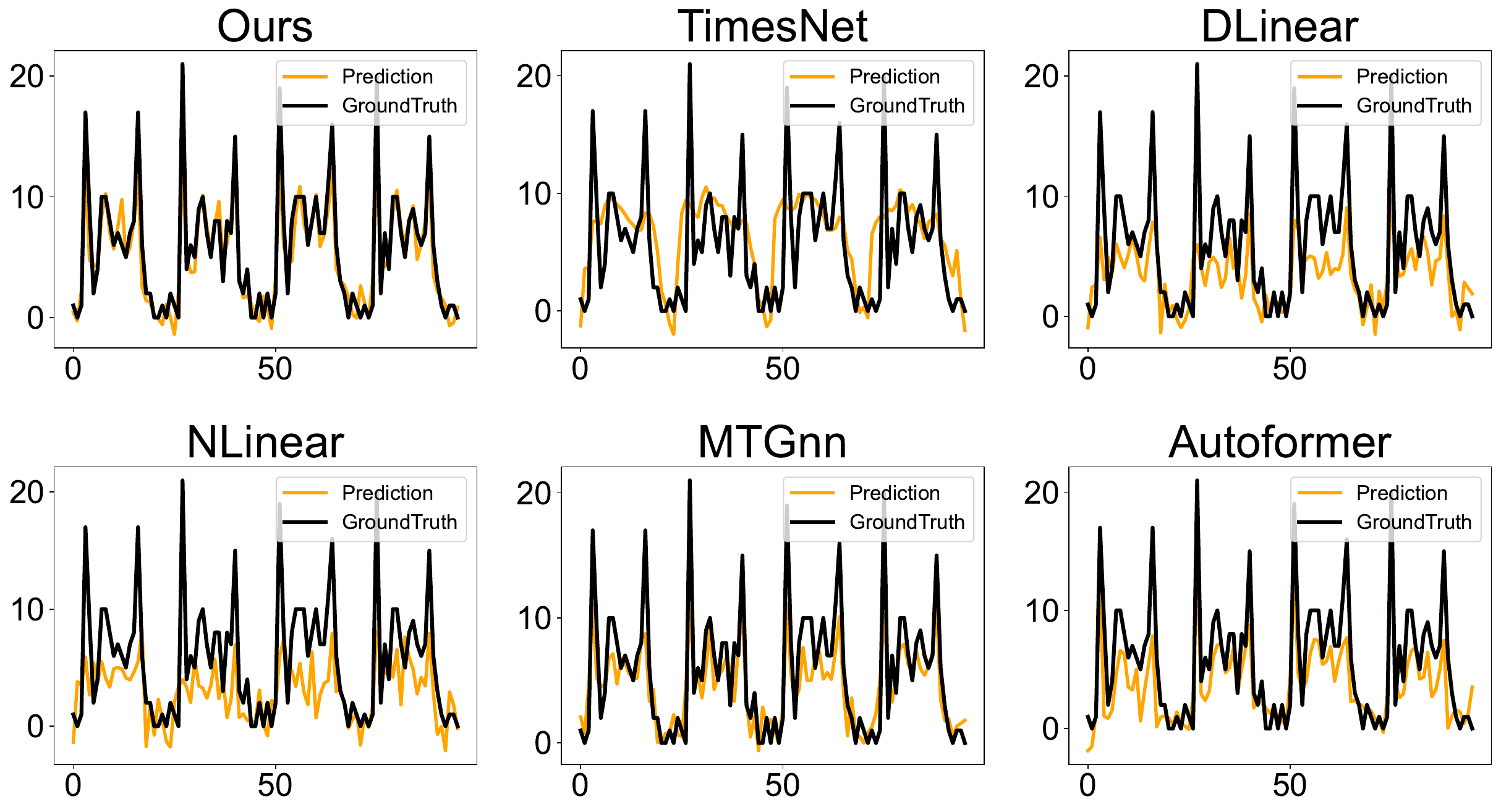}}

\caption{Visualize the Flight dataset prediction with input length $96$ and output length $96$ settings. The selected sequence id is 4.
}
\label{fig: Flight_96}
\end{figure}

\begin{figure}[ht]
\centerline{\includegraphics[scale=0.19]{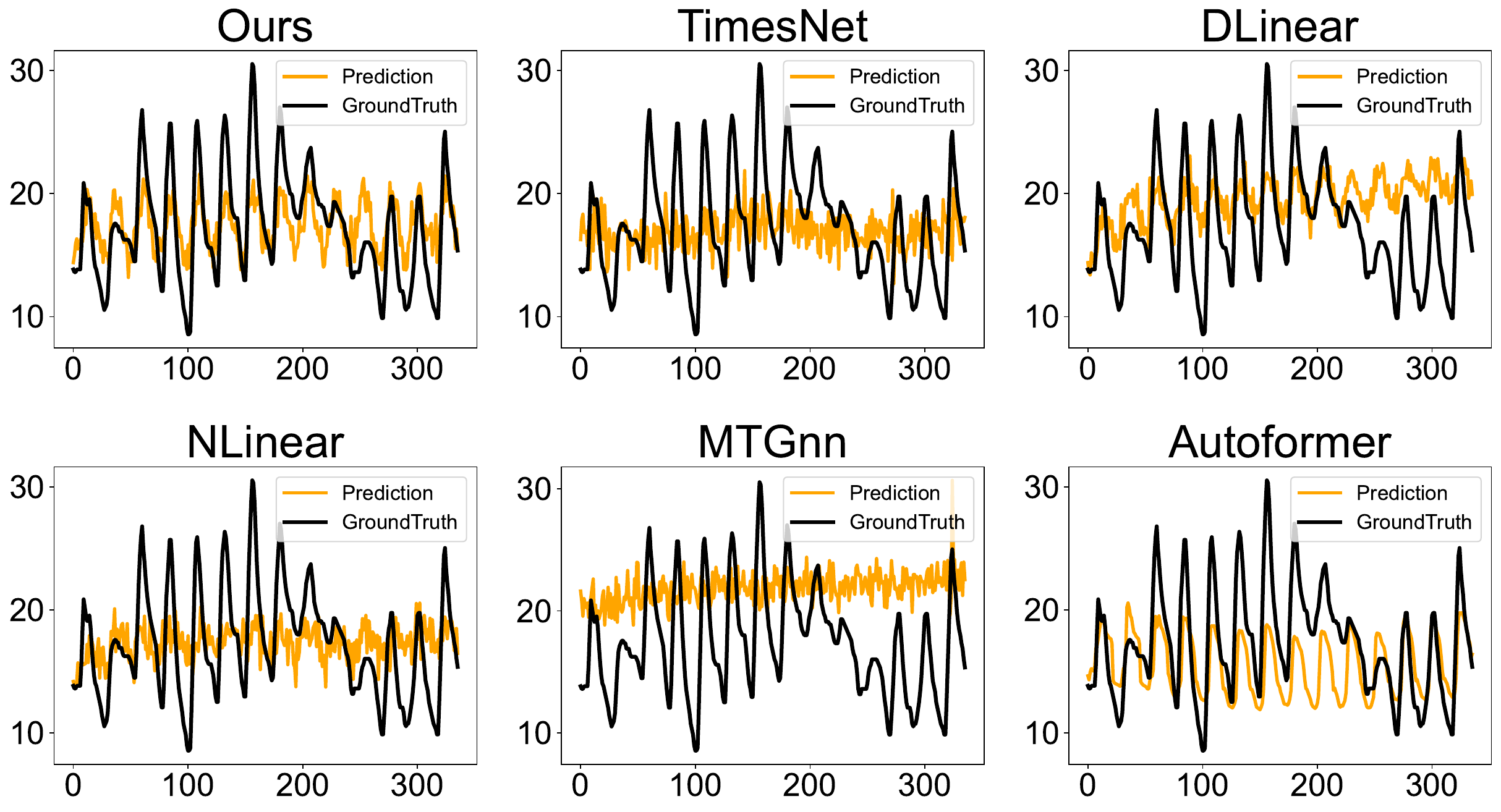}}

\caption{Visualize the prediction of the ETTm2 dataset with an input length of $96$ and an output length of $336$. The selected sequence id is 6.}
\label{fig: ETTh2_336}
\end{figure}

\bibliography{aaai24}

\end{document}